\title{A cryptographic approach to black box adversarial machine learning}
\author{%
  Kevin Shi\\
  Columbia University\\
  \And
  Daniel Hsu\\
  Columbia University\\
  \And
  Allison Bishop\\
  Proof Trading\\
}
\begin{document}

\maketitle

\begin{abstract}
	We propose a new randomized ensemble technique with a provable security guarantee against black-box transfer attacks. Our proof constructs a new security problem for random binary classifiers which is easier to empirically verify and a reduction from the security of this new model to the security of the ensemble classifier. We provide experimental evidence of the security of our random binary classifiers, as well as empirical results of the adversarial accuracy of the overall ensemble to black-box attacks. Our construction crucially leverages hidden randomness in the multiclass-to-binary reduction.
\end{abstract}

\section{Introduction}
Current machine learning models are vulnerable at test time to adversarial examples, which are data points that have been imperceptibly modified from legitimate data points but are misclassified with high confidence. This phenomenon was first described by \cite{Szegedy2013}, who constructed a simple attack that resembled gradient descent on the feature space. This fast gradient sign method computed the gradient of the loss function with respect to the feature space, took the sign of the gradient values, and then added it to the feature values with a small constant factor. Followup work constructed more efficient attacks by iteratively applying this gradient method \cite{Kurakin2016}\cite{Dong2018} or by solving a direct constrained optimization problem \cite{Carlini2017}. 

These attacks all required access to the explicit loss function and parameter settings of the trained classifier, and so black-box models which only revealed the final class label output of the model seemed like a potential method to hide the gradients. Unfortunately, a major show-stopper with black-box models is the phenomenon of transferability \cite{Papernot2016}, where an adversarial perturbation computed for an independently trained model has a high chance of being a successful attack against a separate black-box oracle model. This independently trained model is called a substitute model. Even if the adversary is only given black-box oracle access to predicted labels, existing machine learning models are vulnerable to transfer learning attacks executed by training substitute models \cite{Papernot2016b}. The transfer success rate is the probability that an adversarial example computed for the substitute model is also misclassified by the black-box oracle.

Direct query-based attacks such as zeroth order optimization \cite{Chen2017a} and boundary attack \cite{Brendel2018} have also emerged as alternative black-box attacks without training substitute models. These attacks initialize with any misclassified data point on the other side of the decision boundary and iteratively perform rejection sampling to find a misclassified point closer to the decision boundary. This technique requires at least $10^4$ adaptive queries to the classifier, which means the choice of the next query point depends on the result obtained for the previous query points. In contrast, transfer-based attacks from training substitute models can succeed using a much smaller number of between $0$ to $10$ epochs of adaptive queries, where multiple queries can be presented simultaneously in each epoch.

Researchers have tried many avenues of constructing defenses to prevent these attacks. Previous work has attempted to train models to be explicitly robust to attacks by incorporating robustness into the optimization problem \cite{Madry2017}\cite{Schott2018}, by input transformations and discretization to reduce model linearity \cite{JacobBuckmanAurkoRoyColinRaffell2018}, or by injecting randomness at inference time \cite{Xie2017}. However defenses based on robust training have been subsequently broken by changing the space of allowable perturbations \cite{Sharma2017}, and other defenses have been broken by more sophisticated attacks \cite{Athalye2018}. 

Recent explanations suggest that the existence of adversarial examples is actually inevitable in high-dimensional spaces. \cite{Goodfellow2015} \cite{Gilmer2018}\cite{Ford2019} suggest that these examples exist for any linear classifier with nonzero error rate under additive Gaussian noise. This vulnerability is a simple geometrical fact when the dimension $d$ is large: because most of the mass of a Gaussian distribution is concentrated near the shell, the distance to the closest misclassified example is a factor $d^{1/2}$ closer than the distance to the shell. \cite{Ilyas2019} argue that adversarial perturbations can actually be robust features for generalization, and thus their adversarial nature is just a misalignment with our natural human notions of robustness. 

In light of the evidence for the inevitability of adversarial perturbations, one goal we can still hope to achieve is a computational separation between declaring their existence and finding one. We propose a solution which uses hidden random bits that behave like a cryptographic key, meaning that any instantiation of the random bits works with high probability, but an attacker should not be able to attack the overall classifier without knowing the random bits. The space of all possible random bits in our construction will be exponential in the number of classes, so guessing the random bits is intractable. 

In order to hide the randomness in a single classifier, we use a black-box ensemble scheme in which the adversary learns only the output of the overall ensemble without learning the output of any individual classifiers. Previous ensemble techniques for increasing adversarial robustness only subsample or augment the training data within each class \cite{Tramer2017}, whereas our ensemble samples random splits of the labels themselves within the overall multiclass classification setup. This means that the underlying classification problem is unknown to the adversary, and we argue that this randomness decreases the transfer success rate. In addition, our ensemble construction is allowed to abstain from making a prediction, which behaves functionally like a built-in adversarial example detector and amplifies the robustness gain within each individual classifier.

Because the scope of attacks an adversary can mount is so large, we restrict our adversary to a constant number of epochs of adaptive queries. This still captures practical attacks such as transfer-based attacks that train substitute models from a constant number of epochs, but does not capture iterative attacks making tens of thousands of adaptive queries. In the case of just a single epoch of adaptive queries, we prove that the adversarial test error converges to twice the standard test error as the number of classes increases. The proof is based on a new security assumption which is in principle simpler to empirically verify than the entire construction, and we provide evidence for it on CIFAR-10 against projected gradient descent \cite{Kurakin2016} and momentum iterative gradient method \cite{Dong2018} attacks. We also provide empirical evidence of the effectiveness of this defense against $10$ epochs of adaptive queries on the MNIST and CIFAR-10 data sets using a standard substitute model attack benchmark by \cite{Papernot2016a}. 

\section{Preliminaries}

Let $\mathcal{X}\subset\mathbb{R}^d$ be the feature space, and let $\mathcal{Y}=\{1,2,\dotsc,N\}$ be the set of classes. The learning problem is to construct a multiclass classifier $F: \mathcal{X} \rightarrow \mathcal{Y} \cup \{\omega\}$ that is allowed to abstain from making a prediction by returning the symbol $\omega$. We assume all classifier training is conducted using a fixed training algorithm for binary classification $\mathsf{ML}$ which is public knowledge. $\mathsf{ML}$ takes as input a set of binary-labeled data points $\{(x_i, z_i)\}_{i=1}^n$, where each $x_i\in\mathcal{X}$ and $z_i \in \{\pm 1\}$, and outputs a binary classifier $f: \mathcal{X}\rightarrow\{\pm 1\}$. The multiclass training data $\{(x_i, y_i\})$ is public knowledge, and the binary classifiers are trained over this data set by defining a mapping $\phi:\{1,\dotsc,N\}\rightarrow\{\pm 1\}$ that takes each data point $(x_i, y_i)$ to $(x_i, \phi(y_i))$. Furthermore, we assume that $\mathsf{ML}(\{(x_i, z_i)\})_{i=1}^n = - \mathsf{ML}(\{(x_i, -z_i)\})_{i=1}^n$, which just means that if the labels $-1$ and $1$ were reversed in the training data, then the trained classifier would be identical except for outputting the opposite sign . Lastly, we fix some space $\mathcal{P}\subset\mathcal{X}$ to be the set of allowable adversarial perturbations; a commonly used perturbation space is $\{\rho\in\mathcal{X}\,|\,\|\rho\|_{\infty}<c\}$, which for example constrains each pixel in an image to be modified by a small value. 

\subsection{Threat model}

We consider the setting of a server hosting a fixed classifier $F: \mathcal{X}\rightarrow\{1,\dotsc,N,\omega\}$ and users who interact with the server by presenting a query $q\in\mathcal{X}$ to the server and receiving the output label $F(q)$. We call $F$ a black-box classifier, because the user does not see any of the intermediate computation values of $F(q)$. Two types of users access the server: honest users who present queries drawn from a natural data distribution, and adversarial users who present adversarial examples designed to intentionally cause a misclassification. The desired property is to serve the honest users the true label while simultaneously preventing the adversarial users from causing a misclassification; the latter is accomplished by either continuing to return the true label on adversarial examples or by returning the abstain label $\omega$. 

In order for this distinction to be well-defined, we need to separate natural misclassified examples from adversarial examples. We achieve this by fixing in advance a data point $x$ which is correctly classified by $F(x)$ and requiring the adversary to compute a perturbation $\rho\in\mathcal{P}$ for this specific $x$ such that $F(x+\rho)\not\in\{F(x),\omega\}$. We think of $x$ as a parameter of the attack, for example the natural image of the face of an attacker who wishes to masquerade as someone else. The classifier $F$ is secure for $x$ if, with high probability over the construction of $F$, the adversary cannot find a $\rho$ satisfying this.

We formalize this attack problem by the notion of a \emph{security challenge}. The adversary is given all the information about $F$ except for any internal randomness used to initialize $F$. The adversary is then given the \emph{challenge point} $(x, y)$ with $F(x)=y$ being the correct classification, and the adversary successfully solves the security challenge if he finds a $\rho$ such that $F(x+\rho)\not\in \{\omega, F(x)\}$ with non-negligible probability. The solution to the security challenge is a successful attack. 

The separation between existence of a solution and feasibility of finding it is given by resource constraints on the adversary, most commonly in the form of runtime. We say that a security challenge is \emph{computationally secure} if there does not exist an algorithm for finding a solution within these resource constraints. In addition to runtime, we also consider the constraint of how many times the adversary is allowed to interact with the classifier. 

We make a distinction between these \emph{query points} (denoted by $q$) and the \emph{challenge point} (denoted by $x$), both of which are feature vectors in $\mathcal{X}$. Query points are arbitrarily chosen by the adversary for the purpose of learning more about the black-box $F$, and there is no notion of correctness for $F(q)$. The ability to obtain labels for arbitrary query points is the key factor that enables the adversary to mount more powerful black-box attacks; without query access, the attacker is limited to relatively simple transfer-based attacks from models trained on standard datasets. We leverage this distinction to obtain a provable security guarantee by using cryptographic proof techniques.

\subsection{Security proofs in cryptography}

Instead of directly trying to prove the security of $F$, we define a simpler system $f$ that is easier to empirically test and reason about. We then prove a reduction from the security challenge of $F$ to the security challenge of $f$, which shows that $F$ is at least as hard to attack as $f$. We define a \emph{security assumption} that characterizes the hardness of attacking $f$. This security assumption is not mathematically proven to be true, but nonetheless defining the right assumption makes the reduction is useful, because this assumption can be easier to empirically study. If the security assumption is true, then $F$ is secure. The security assumption we define is the hardness of attacking a new type of randomized classifier without any query access to it.

\subsection{Random binary classifiers}

In a multiclass classification problem with labels $1,\dotsc,N$, suppose we have a binary classifier $f: \mathcal{X}\rightarrow\{\pm 1\}$ for two particular classes $y$ and $t$, where class $y$ is mapped to $+1$ and class $t$ is mapped to $-1$. An adversary is given a data point $(x,y)$ with $f(x)=+1$, and the adversary wishes to attack this binary classifier by computing a perturbation $\rho$ such that $f(x+\rho) = -1$. If $f$ were a standard binary classifier trained on the $y$ versus $t$ classification problem, then this would be a straightforward transfer attack scenario. However, instead $f$ is trained with all remaining $N-2$ classes also having been randomly remapped to $\pm 1$ with equal probability. In other words, for each class $k\not\in\{ y,t\}$, we sample a Rademacher random variable $z_k \sim\{\pm 1\}$ and assign every data point of original label $k$ to the new binary label $z_k$. This random assignment does not change the original $y$-vs-$t$ classification task when all data points are only of original class $y$ or $t$. The resulting $f$ corresponding to training with the random binary labels $\{\pm 1\}^{N-2}$ is a \emph{random binary classifier}:

\begin{definition}[Random binary classifier] Let $\mathcal{D}$ be a distribution over $\{\pm 1\}^N$. The \emph{random binary classifier} over $\mathcal{D}$ is the distribution of $f$ over $z\sim\mathcal{D}$ where each training data point $x_i$ is relabeled to $\pm 1$ by $z_{y_i}$:
	\begin{align*}
	f_z := \mathsf{ML}\left(\left\{(x_i, z_{y_i})\right\}_{i=1}^n\right).\qed
	\end{align*}
\end{definition}


The security challenge for the random binary classifier is to compute a perturbation that changes its output with high probability over the sampling of $z$. 

\begin{definition}[Security challenge for random binary classifier]\label{def:random_classifier_challenge}
	Let $f_z := \mathsf{ML}\{(x_i, z_{y_i})\}_{i=1}^n$. Let $z\sim\{\pm 1\}^N$ be a Rademacher random vector, and let $\mathcal{D}_{yt}$ be the distribution of $z$ conditioned on $z_y = +1, z_t = -1$. The \emph{security challenge} for a challenge data point $(x, y)$, failure rate $\delta>0$, and target label $t\neq y$ is to compute a perturbation $\rho \in \mathcal{P}$ which changes the output of $f_z(x)$ with failure rate no greater than $\delta$:
	\begin{align*}
	\mathop{\Pr}_{z\sim\mathcal{D}_{yt}}[f_z(x+\rho) \neq f_z(x)] > 1-\delta.
	\end{align*}
	In particular, the adversary has no ability to obtain labels for query points from the random binary classifier. \qed
\end{definition}

Note that the adversary has knowledge of two of the bits of $z$, corresponding to the original label $y$ and some target label $t\neq y$. Our security assumption is that for any $\rho\in\mathcal{P}$, there is enough randomness in the remaining $N-2$ data classes such that the failure rate is non-negligible.

\begin{assumption}[Security assumption]\label{def:security_assumption}
	Given an instance of the security challenge for a random binary classifier with parameters defined as in Definition \ref{def:random_classifier_challenge}, for any $\rho\in\mathcal{P}$, for all $c > 0$, there exists a constant $N_0 > 0$ such that
	\begin{align*}
		\mathop{\Pr}_{z\sim\mathcal{D}_{yt}} \left[ f_z(x+\rho) \neq f(x)\right] \le 1 - 1/N^c
	\end{align*}
	whenever $N \ge N_0$. \qed
\end{assumption}

Note that this implicitly assumes $\mathcal{P}$ does not contain any non-adversarial perturbations, such as those of the form $x'-x$ where $x'$ is a legitimate image of class $t$. This assumption also does not place any computational constraints on the adversary yet; the security comes from the randomness in $z\sim\mathcal{D}_{yt}$, which is sampled after $\rho$ is already fixed. In Section \ref{sec:perturb}, we experimentally justify this assumption by estimating the transfer success probability for all pairs of classes $(y,t)$ in the CIFAR-10 dataset using the standard $\ell_{\infty}$-ball for $\mathcal{P}$ and two different state-of-the-art transfer attacks.

We give two reasons why this assumption is the right one to make. Firstly, the scope of attacks to analyze is greatly reduced when the attacker has no access to the classifier. The adversary can essentially only mount transfer learning attacks by training models on the public dataset. Secondly, we only require the probability of success of the adversary to be bounded below $1$ by a constant, and the overall security of the ensemble can be boosted from this bound. 

\subsection{Main construction}

Recall that our goal is to construct a multiclass classifier $F: \mathcal{X} \longrightarrow \{1,2,\dotsc,N, \omega\}$ which is allowed to abstain from making a prediction (as represented by the output $\omega$), and an adversarial perturbation $\rho$ is only considered a successful attack if $F(x+\rho) \not\in \{F(x), \omega\}$. 

Our ensemble construction is the error-correcting code approach for multiclass-to-binary reduction \cite{Dietterich1994}, except with completely random codes for security purposes.

\begin{construction}[Random ensemble classifier]\label{construction:ensemble}
	Given a multiclass classification problem with labels $\mathcal{Y}=\{1,\dotsc,N\}$, a codelength $M$, and a threshold parameter $r \in (0, 1/3)$:
	\begin{itemize}
		\item Sample random matrix $Z \in \{\pm 1\}^{N\times M}$, where each $Z_{ij}\sim \{\pm 1\}$ independently and with equal probability
		\item For $j=1,\dotsc,M$, construct the binary classifier $f_j = \mathsf{ML}\left(\{(x_i, Z_{y_ij})\}_{i=1}^n\right)$
	\end{itemize}
	Given a query data point $x$, compute output $F(x)$ by:
	\begin{itemize}
		\item Compute the predicted codeword vector $C(x) := (f_1(x),\dotsc, f_M(x))$
		\item Compute $(d^*, y^*) = \displaystyle\mathop{\min}_y \|Z_y - C(x)\|_H$, where $y^*$ is the index and $d^*$ is the Hamming distance to $Z_{y^*}$
		\item If $d^* < Mr$, then output $y^*$, else output $\omega$\qed
	\end{itemize}
\end{construction}

In this construction, the codeword $Z_y \in \{\pm 1\}^M$ acts as the identity of class $y$, and thus the classification of a data point $x$ is the class codeword which is closest to its predicted codeword $C(x)$. We should think of the free parameters as $M = \Omega(\text{poly}(N))$ and $r = O(1/N)$. $M$ needs to be sufficiently large in order for the random ensemble classifier to be accurate on natural examples. The parameter $r$ should be greater than the standard test error of a trained classifier, or otherwise the ensemble will abstain on too many legitimate test samples. However $r$ must be small enough for security purposes, which we will quantify in our main theorem.

We give some intuition for why this construction has desirable security properties. In order for an adversary to change the overall output of some test point $(x,y)$, he needs to change the output of sufficiently many binary classifiers $f_j$ so that $C(x+\rho)$ is close to some codeword $Z_t, t\neq y$. But the Hamming distance between $Z_y$ and $Z_t$ is $M/2$ on expectation, and $x, x+\rho$ must be within distance $Mr$ to $Z_y, Z_t$ respectively. Since each $f_i$ is constructed independently at random, the overall probability of success is exponentially decreasing in the probability of successfully changing the output of an individual classifier.

We proceed to define the security challenge for this construction. We will use the shorthand notation $Z\sim\{\pm 1\}^{N\times M}$ to denote the distribution of $Z\in\{\pm 1\}^{N\times M}$ where each entry is independently sampled from $\{\pm 1\}$ with equal probability. 

\begin{definition}[Security challenge for random ensemble]\label{def:random_ensemble_challenge}
	Let $F_Z(\cdot)$ be the ensemble classifier constructed with random hidden code matrix $Z$ as defined in Construction \ref{construction:ensemble}. The \emph{security challenge} for a challenge data point $(x, y)$ and accuracy $\epsilon\in(0,1)$ is a two-round protocol:
	
	\begin{enumerate}
		\item Provide $Q$ nonadaptive queries to $F_Z(\cdot)$ and receive answer labels, denoted by $\{(q_k, a_k)\}_{k=1}^Q$. The queries cannot depend on the hidden random code $Z$, but can otherwise depend on the public information such as the training data and the oracle $\mathsf{ML}$.
		\item Return a perturbation $\rho\in\mathcal{P}$ by some function of the query answers $\rho = \phi(\{a_k\}_{k=1}^Q)$ such that $\rho$ satisfies
	
	\[\mathop{\Pr}_{Z\sim\{\pm 1\}^{N\times M}} \left[F_Z(x+\rho)\not\in \{F_Z(x), \omega\} \right] > \epsilon,\]
	
	\end{enumerate}
An algorithm for solving the security challenge is determined by its query set $\{q_k\}_{k=1}^Q$ and the function $\phi$ for computing the final perturbation from the query answers. \qed
\end{definition}

For example, one possible attack captured by this definition is training a substitute model with a one epoch of data augmentation obtained from querying the classifier, as described by \cite{Papernot2016b}. The adversary starts with a pre-labeled dataset of arbitrary size, usually the public training data set, and trains an initial substitute model. The adversary then refines this initial model by using Jacobian data augmentation to add new synthetic data points to the training data. In each epoch of data augmentation, the adversary obtains labels for these synthetic points using the black-box classifier.

The synthetic data points are the queries $q_1,\dotsc, q_Q$, and thus our proof guarantees security against a single epoch of data augmentation. The actual implementation of this attack in \cite{Papernot2016a} uses $10$ substitute training epochs, and our proof does not apply directly to this implementation, because the second round of queries can depend on the answers in the first round. Nonetheless, we show empirically in Section \ref{sec:empirical_substitute} that our construction is still secure against the benchmark of $10$ data augmentation epochs.

\section{Security results}\label{sec:theory}

The main theoretical result is a reduction from solving the random classifier challenge to solving the random ensemble challenge. In our reduction, we make the simplifying assumption that the space of allowable perturbations $\mathcal{P}$ is the same in both security challenges. This allows us to get away with not explicitly defining which perturbations are adversarial and which are legitimate, because a perturbation which makes $x+\rho$ a legitimate image of the class $t$ would solve both security challenges simultaneously. We also assume without loss of generality that $r$ is chosen such that $Mr \in \mathbb{Z}$, because Hamming distance is an integer. 


\begin{theorem}\label{thm:main}
	Suppose there exists an algorithm $\mathcal{A}$ that can solve the security challenge for the random ensemble with any threshold $r\in(0,1/2)$ such that $Mr\in\mathbb{Z}$ using $Q$ queries and with accuracy $\epsilon \in(0,1)$. Then there is an algorithm that can compute a perturbation $\rho$ which solves the security challenge for a random binary classifier with failure rate
	
	\[\delta < 2\left(r + \sqrt{\frac{\log(1/\epsilon)}{2M}}\right).\]
	
	The algorithm succeeds in computing this perturbation with probability (over $Z$) at least
	
	\[1 - 4NQ\sqrt{\frac{1-r}{2\pi Mr}} 2^{-M(1-H_2(r))},\]
	
 	where $H_2(r) = -r\log_2 r - (1-r)\log_2 (1-r)$ is the negative entropy function and can be bounded away from $1$ when $r$ is bounded away from $1/2$. 

\end{theorem}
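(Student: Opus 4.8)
The plan is to build an algorithm for the random binary classifier challenge that internally simulates the random ensemble adversary $\mathcal{A}$. Given a challenge point $(x,y)$ and target $t$ for the binary problem, we must produce a perturbation $\rho$ that flips $f_z(x)$ with high probability over $z\sim\mathcal{D}_{yt}$. The natural idea: the binary challenge gives us one column of a code matrix, so we should manufacture the other $M-1$ columns ourselves. Concretely, sample $M-1$ fresh Rademacher columns to form (together with the hidden $z$) a code matrix $Z\in\{\pm1\}^{N\times M}$ whose distribution is exactly the uniform one required by the ensemble challenge, conditioned on the $y$-th and $t$-th rows being consistent with $z_y=+1,z_t=-1$. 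Run $\mathcal{A}$ to get queries; but $\mathcal{A}$ expects oracle answers $F_Z(q_k)$, and column $f_z$ is hidden. Here is where the query-point/challenge-point distinction is exploited: we cannot query $f_z$, so we must answer the ensemble oracle queries \emph{without} knowing that column. I would answer each $F_Z(q_k)$ by computing the $M-1$ known classifier outputs and then arguing that with high probability over $Z$ the hidden bit $f_z(q_k)$ does not change the Hamming-nearest codeword — i.e. the decoding is robust to one flipped coordinate. This is the step I expect to be the main obstacle: showing that a single unknown coordinate rarely affects the output, which requires that for each query $q_k$ the gap between the closest and second-closest codeword distances is rarely exactly the borderline value, and more importantly that no codeword sits within distance $Mr$ of $C(q_k)$ in a way that one bit tips across the threshold $Mr$ or across a tie. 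This is exactly where the $2^{-M(1-H_2(r))}$ term and the union bound over $NQ$ (codewords $\times$ queries, with the factor $4$ absorbing the two threshold/tie events and a $\Theta(1)$ Stirling slack $\sqrt{(1-r)/(2\pi Mr)}$) should come from: a random codeword $Z_j$ lands within Hamming distance $\approx Mr$ of a fixed vector with probability $\le \binom{M}{Mr}2^{-M} \le \sqrt{\tfrac{1-r}{2\pi Mr}}\,2^{-M(1-H_2(r))}$ by the standard binomial tail / Stirling estimate.

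**From the simulated attack to a binary perturbation.** Conditioned on the good event (all $Q$ oracle answers computed correctly without the hidden column), $\mathcal{A}$ returns $\rho$ with $\Pr_{Z}[F_Z(x+\rho)\notin\{F_Z(x),\omega\}]>\epsilon$. I then need to convert this into the failure-rate bound $\delta<2\bigl(r+\sqrt{\log(1/\epsilon)/(2M)}\bigr)$ for the binary challenge. The key observation: if $F_Z(x)=y$ (decoding succeeds at $x$) and $F_Z(x+\rho)=t'\neq y$, then $C(x)$ is within $Mr$ of $Z_y$ and $C(x+\rho)$ is within $Mr$ of $Z_{t'}$, so by the triangle inequality for Hamming distance the codewords $C(x)$ and $C(x+\rho)$ differ in at least $\|Z_y-Z_{t'}\|_H - 2Mr$ coordinates; since $\|Z_y-Z_{t'}\|_H$ concentrates around $M/2$, the perturbation flips a constant fraction of the $M$ binary classifiers. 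Averaging over the $M$ columns, there must exist a column index $j$ such that $f_j(x+\rho)\neq f_j(x)$ with probability at least (roughly) $\epsilon\cdot(\text{fraction flipped})$ over $Z$ — but since the columns are i.i.d. and symmetric, the hidden column $z$ behaves like a uniformly random column, so this lower-bounds $\Pr_z[f_z(x+\rho)\neq f_z(x)]$, which is $1-\delta$. I would use a Hoeffding bound on $\|Z_y-Z_t\|_H$ (deviation $\sqrt{M\log(1/\epsilon)/2}$, which is where $\sqrt{\log(1/\epsilon)/(2M)}$ enters) to say the distance is at least $M(1/2 - \sqrt{\log(1/\epsilon)/(2M)})$ except on an $\epsilon$-fraction of $Z$, combine with the $2Mr$ slack from the two decoding balls, and push through the symmetrization between a "fresh" column and the hidden $z$ to land the factor of $2$ and the stated $\delta$.

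**Assembling the two probability statements.** Finally I would separate the two randomness sources cleanly: (i) the correctness of the simulation, which holds with probability $\ge 1 - 4NQ\sqrt{(1-r)/(2\pi Mr)}\,2^{-M(1-H_2(r))}$ over the draw of $Z$ (equivalently over the $M-1$ simulated columns together with $z$), and (ii) conditioned on (i), the adversary's $\epsilon$ success translates via the distance/symmetrization argument into failure rate $<\delta$. The statement "succeeds in computing this perturbation with probability at least $1-\cdots$" refers to event (i); once we are in that event the output $\rho$ is a valid solution to the binary challenge with the claimed $\delta$. I would close by noting $H_2(r)<1$ strictly for $r\neq 1/2$, so the failure probability in (i) is exponentially small in $M$ whenever $r$ is bounded away from $1/2$ and $M=\Omega(\mathrm{poly}(N))$ dominates $\log(NQ)$.

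**Anticipated main difficulty.** The delicate point is the oracle simulation: a priori $\mathcal{A}$ could choose its (nonadaptive) queries $q_k$ to be points where the hidden coordinate $f_z(q_k)$ is decisive, e.g. points engineered so that $C(q_k)$ sits at Hamming distance exactly $Mr$ or at a two-codeword tie. The resolution is that $q_k$ must be fixed \emph{before} $Z$ is drawn (the nonadaptivity and $Z$-independence clauses in Definition \ref{def:random_ensemble_challenge} are doing real work here), so for each fixed $q_k$ the vector $C(q_k)$ and all codewords $Z_j$ are jointly random, and the "bad" configurations — some $Z_j$ within the critical Hamming shell of $C(q_k)$, or a borderline tie — each have probability $O\bigl(\sqrt{(1-r)/(2\pi Mr)}\,2^{-M(1-H_2(r))}\bigr)$ by the binomial-tail estimate; a union bound over the $Q$ queries and $N$ codewords (with a small constant factor for the distinct bad events) gives the stated bound. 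Getting the Stirling constant and the union-bound bookkeeping to match the displayed $4NQ\sqrt{(1-r)/(2\pi Mr)}$ exactly is the one place real calculation is needed.
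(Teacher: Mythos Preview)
Your oracle–simulation step is essentially the paper's argument (Lemma~\ref{lem:trim}): sample $M-1$ fresh columns, answer the nonadaptive queries from those alone, and use the binomial–shell estimate $\binom{M}{Mr}2^{-M}\le \sqrt{\tfrac{1}{2\pi Mr(1-r)}}\,2^{-M(1-H_2(r))}$ together with a union bound over $N$ codewords and $Q$ queries to control the probability that the hidden bit ever tips a decoding. That part is fine.

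The gap is in your second step, converting the ensemble success $\epsilon$ into the binary failure rate $\delta$. Your plan is: (i) on the attack–success event, triangle inequality gives $\|C(x)-C(x+\rho)\|_H\ge\|Z_y-Z_{t'}\|_H-2Mr$; (ii) Hoeffding on $\|Z_y-Z_{t'}\|_H$ concentrates it near $M/2$; (iii) average over columns and use symmetry to get a per-column flip probability. But (i) only holds on an event of probability $\epsilon$, so the averaging in (iii) yields
\[
1-\delta \;=\; \Pr\bigl[f_z(x+\rho)\neq f_z(x)\bigr]\;=\;\tfrac{1}{M}\,\mathbb{E}\|C(x)-C(x+\rho)\|_H
\;\gtrsim\;\epsilon\!\left(\tfrac{1}{2}-r-\sqrt{\tfrac{\log(1/\epsilon)}{2M}}\right),
\]
which is the wrong shape: you get $\delta\le 1-\epsilon\cdot C$, not $\delta<2\bigl(r+\sqrt{\log(1/\epsilon)/(2M)}\bigr)$. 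No amount of ``symmetrization'' removes the multiplicative $\epsilon$; a first-moment argument simply cannot recover the exponential coupling between $\epsilon$ and $\delta$ that the statement asserts. Relatedly, your Hoeffding is applied to the wrong Binomial: you are controlling $\|Z_y-Z_t\|_H\sim\mathrm{Binom}(M,1/2)$, whereas the $\sqrt{\log(1/\epsilon)/(2M)}$ in the theorem actually comes from a different tail.

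The paper's route (Lemma~\ref{lem:binomial}) avoids averaging entirely. After establishing via Lemma~\ref{lem:trim} that $\rho$ is independent of each column $Z^j$ (so the per-column events are i.i.d.), it looks at $\mathcal{E}_{tj}:=\{f_j(x+\rho)=Z_{tj}\}$ and computes, by conditioning on $Z_{tj}\in\{\pm1\}$,
\[
\Pr[\mathcal{E}_{tj}] \;=\; \tfrac12\Pr\bigl[f_j(x+\rho)=-1\mid Z_{yj}=+1,Z_{tj}=-1\bigr]+\tfrac12\Pr[\cdots]\;\le\;\tfrac12(1-\delta)+\tfrac12\;=\;1-\tfrac{\delta}{2}.
\]
Then $\sum_j\mathbb{1}[\mathcal{E}_{tj}]\sim\mathrm{Binom}(M,p)$ with $p\le 1-\delta/2$, and Hoeffding applied \emph{to this} Binomial gives $\Pr[\text{decoded as }t]\le\exp\!\bigl(-2M(\delta/2-r)^2\bigr)$. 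Since the ensemble attack succeeds with probability at least $\epsilon$, one solves $\epsilon\le\exp(-2M(\delta/2-r)^2)$ for $\delta$ to obtain the stated bound. The crucial point you are missing is that the $\sqrt{\log(1/\epsilon)/(2M)}$ term arises from Hoeffding on the per-column \emph{match} indicators (whose parameter is tied to $\delta$), not on the codeword distance (whose parameter is $1/2$). Replace your triangle-inequality/averaging step with this direct Binomial-tail argument and the bound falls out.
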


The theorem shows that if such an algorithm $\mathcal{A}$ exists, $r = O(1/N)$, and $M=\Omega(\text{poly}(N))$, then the failure rate decreases as $O(1/N^c)$ for some constant $c$, which contradicts the security assumption (Assumption \ref{def:security_assumption}). Conversely, if the security assumption is true, then an adversary cannot solve the security challenge for the random ensemble with $O(\text{poly}(N))$ nonadaptive queries to the ensemble classifier. When $r < \frac{\delta}{2}$ and the security assumption is true, the theorem gives the following upper bound on the adversarial test error:

\begin{align*}
	\epsilon &< \exp\left(-2M \left(\frac{\delta}{2} - r\right)^2 \right).
\end{align*}

Recall that the parameter $r$ needs to be greater than the standard test error of a random binary classifier for good standard test accuracy of the ensemble, but less than $\frac{\delta}{2}$ for good adversarial accuracy. The more accurate each random binary classifier is, the smaller we can set the value of $r$ to be, which in turn gives a smaller upper bound on the adversarial test error $\epsilon$. This shows that our definition of adversarial test error is compatible with standard test error. 

We give a brief proof sketch here, deferring the full proof to Section \ref{sec:proofs}. Given a single random classifier $f_z$, we can simulate the entire ensemble classifier $F_Z$ by constructing the remaining $M-1$ random classifiers using the public data set and $\mathsf{ML}$. However, we cannot apply $\mathcal{A}$ to $F_Z$ directly, because in Definition \ref{def:random_classifier_challenge} there is no query access to $f_z$. Thus we first show in Lemma \ref{lem:trim} that we can simulate the output of the entire ensemble using only $M-1$ classifiers with high probability.

Applying the algorithm $\mathcal{A}$ the ensemble of $M-1$ classifiers produces an attack perturbation $\rho$. Since this simulates the ensemble of $M$ classifiers with high probability, then this attack perturbation also applies to the entire ensemble of $M$ classifiers. Now we want to compute the probability of the output of each individual classifier in the ensemble being changed, but the $Q$ queries could potentially leak information about some column $Z^j$. We use Lemma \ref{lem:trim} for each column $j$ to show that this is not the case; i.e. that the query answers are completely determined by the remaining $M-1$ columns with high probability and thus independent of column $j$ itself. Then we show in Lemma \ref{lem:binomial} that an overall success probability of $\epsilon$ gives an upper bound on $\delta$ for each individual classifier.

\section{Empirical results}

We provide empirical analysis on the security assumption (Assumption \ref{def:security_assumption}) and the adversarial test accuracy against black-box substitute model training attacks for the MNIST \cite{LeCun1998} and CIFAR-10 \cite{Krizhevsky2009} datasets. We use code from the CleverHans adversarial examples library \cite{Papernot2016a} and from the MadryLab CIFAR10 adversarial examples challenge \cite{Madry} for the base classifier architecture, training, and attacks. The only modification to the base classifier architecture was to change the output layer from dimension $10$ to dimension $2$ for a binary output; no further architecture tuning was performed to optimize natural accuracy. 

\subsection{Analysis of random binary classifiers}\label{sec:perturb}

First, we empirically estimate the transfer success rate for all pairs of classes. We train a sample size of 40 random binary classifiers and then compute an adversarial perturbation for each test data point and each target class. The perturbation is computed by using a pre-trained standard model for the respective dataset with all $N$ output dimensions. We then compute whether each random binary classifier makes a different prediction on the original test data point versus the perturbed test data point. Finally, for each pair $(y,t)$, we empirically estimate the probability of the output of $f_z(\cdot)$ being changed conditioned on $z_y \neq z_t$ and plot this. The goal of this analysis is to show that this probability is bounded below $1$ by a constant.

We use the Projected Gradient Descent and the Momentum Iterated Gradient Descent transfer attacks on the cross-entropy loss with an $\ell_{\infty}$ norm bound of $\epsilon = 8$. The pre-trained substitute is a w28-10 wide residual network \cite{Zagoruyko2016}, and the random binary classifiers are the same ResNet architecture but with two output dimensions instead of ten. We visualize the average-case success probability in an $N\times N$ grid where the $(y,t)$ coordinate shows the attack success probability over original data points of class $y$ and target label $t$. The color of each cell represents the probability using the Viridis color palette shown in Figure \ref{fig:viridis}.

\begin{figure}[ht]
	\centering
	\subfloat{{\includegraphics[width=0.45\columnwidth]{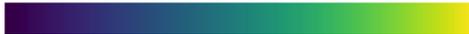}}}
	\caption{Viridis color palette, uniformly scaled from $0$ to $1$}
	\label{fig:viridis}	
\end{figure}

Figure \ref{fig:cifar10_confusion} shows the empirical success probabilities of the attack over the CIFAR-10 data set for all pairs of classes. 

\begin{figure}[ht]
	\centering
	\subfloat{{
	\includegraphics[width=0.45\columnwidth]{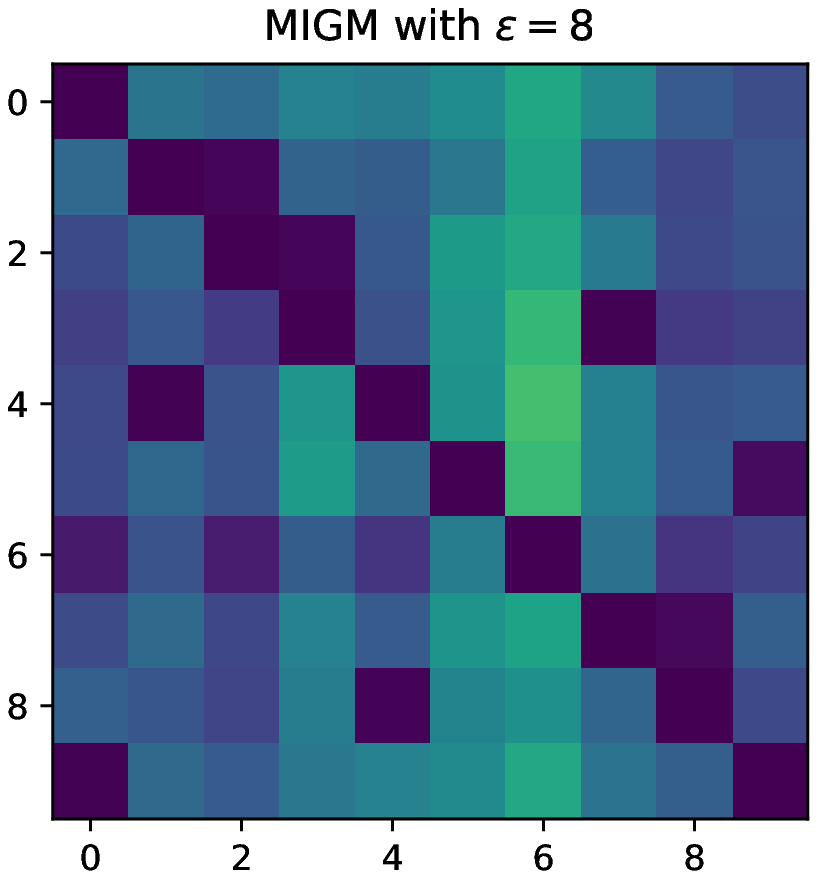}}}
	\qquad
	\subfloat{{\includegraphics[width=0.45\columnwidth]{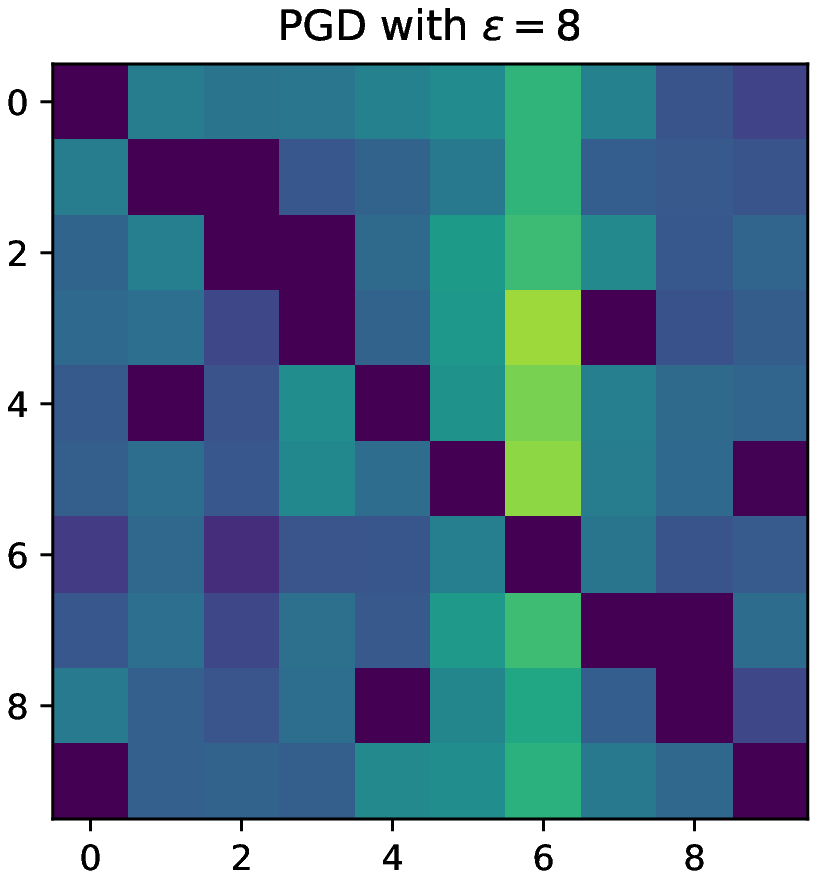}}}
	\caption{Success probabilities for targeted attacks on CIFAR-10 random binary classifiers}
	\label{fig:cifar10_confusion}	
\end{figure}

In the image, the cell $(4,6)$ appears to have the highest probability, and the entire column $y=6$ (frog) appears to have particularly high average success rate as a target class. For our security definition, we are interested in worst-case attack success rates, so we plot the distribution over each test data point for the $(y,t)$ pairs $(4,6)$ and $(5,3)$. Figure \ref{fig:cifar10_pgd_dist} and Figure \ref{fig:cifar10_migm_dist} show the individual success rates for MIGM and PGD, respectively.

\begin{figure}[ht]
	\centering
	\subfloat{{\includegraphics[width=0.45\columnwidth]{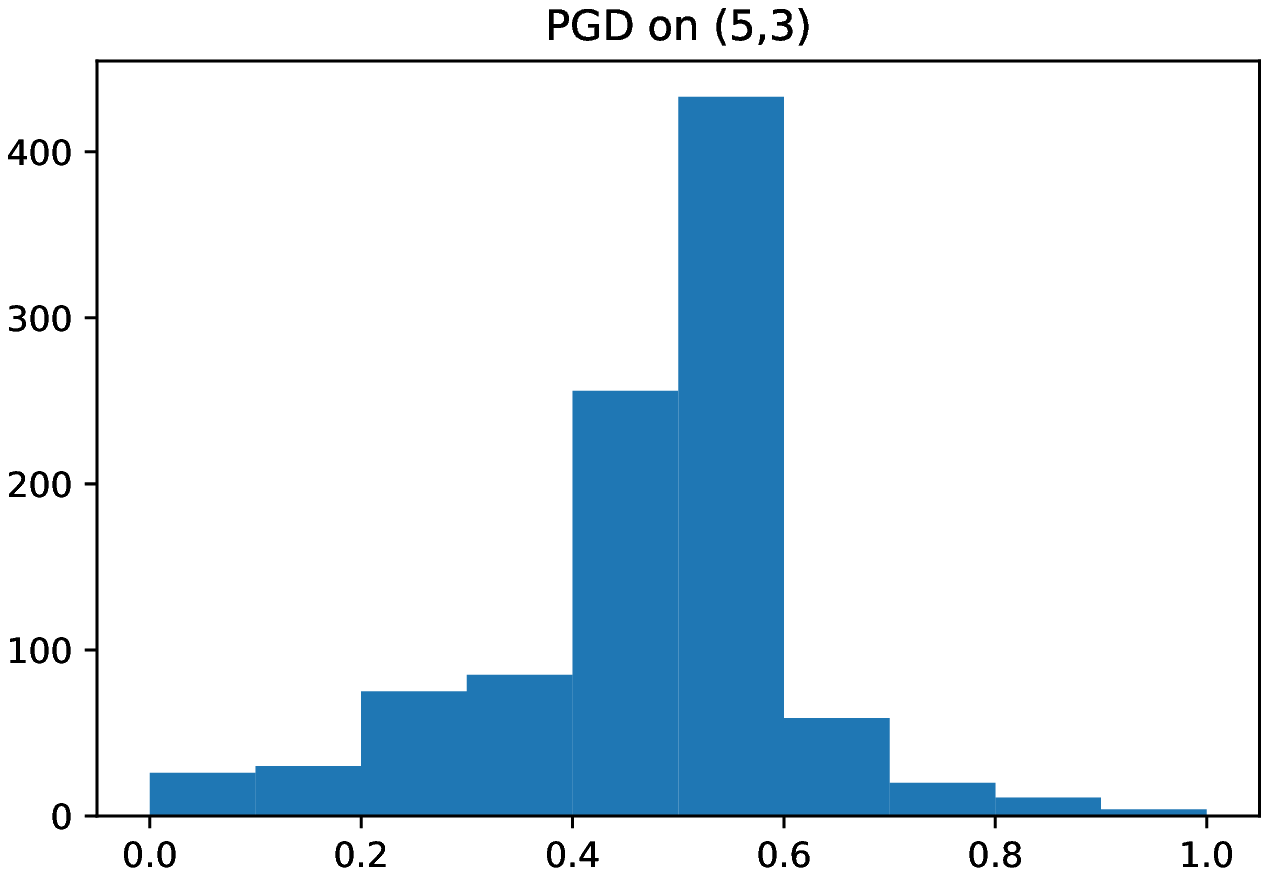}}}
	\qquad
	\subfloat{{\includegraphics[width=0.45\columnwidth]{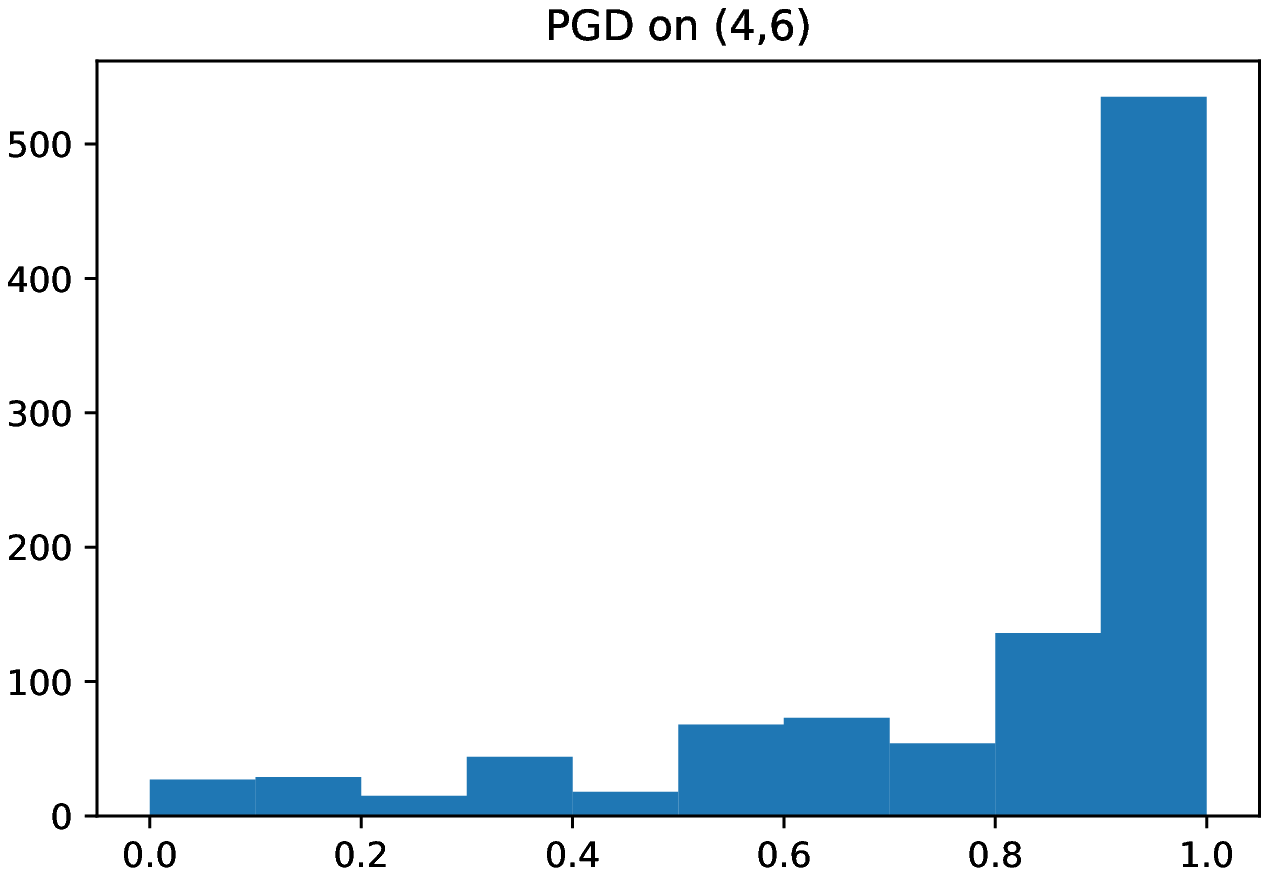}}}
	\caption{Distribution of success probabilities for individual CIFAR10 test data points under PGD attack}
	\label{fig:cifar10_pgd_dist}
\end{figure}

\begin{figure}[ht]
	\centering
	\subfloat{{\includegraphics[width=0.45\columnwidth]{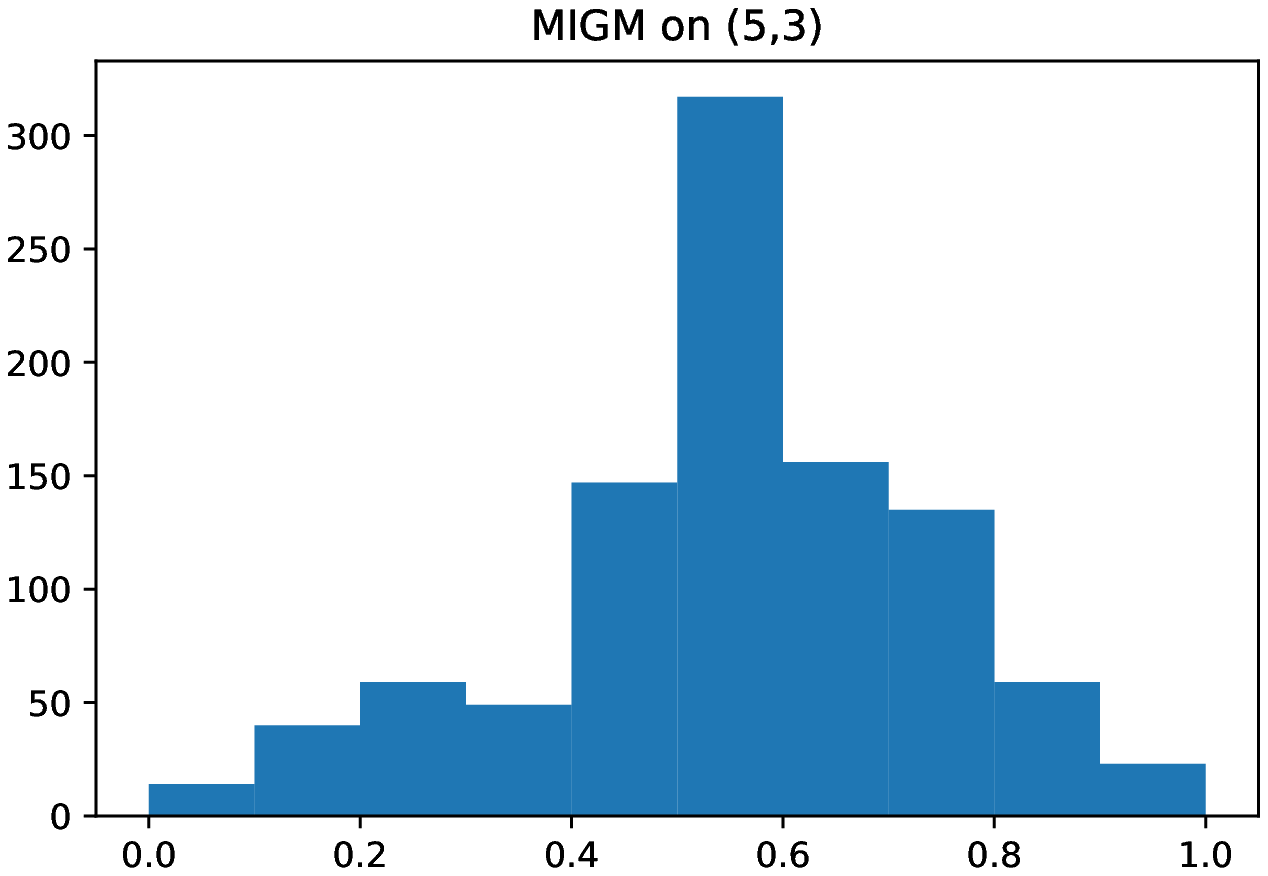}}}
	\qquad
	\subfloat{{\includegraphics[width=0.45\columnwidth]{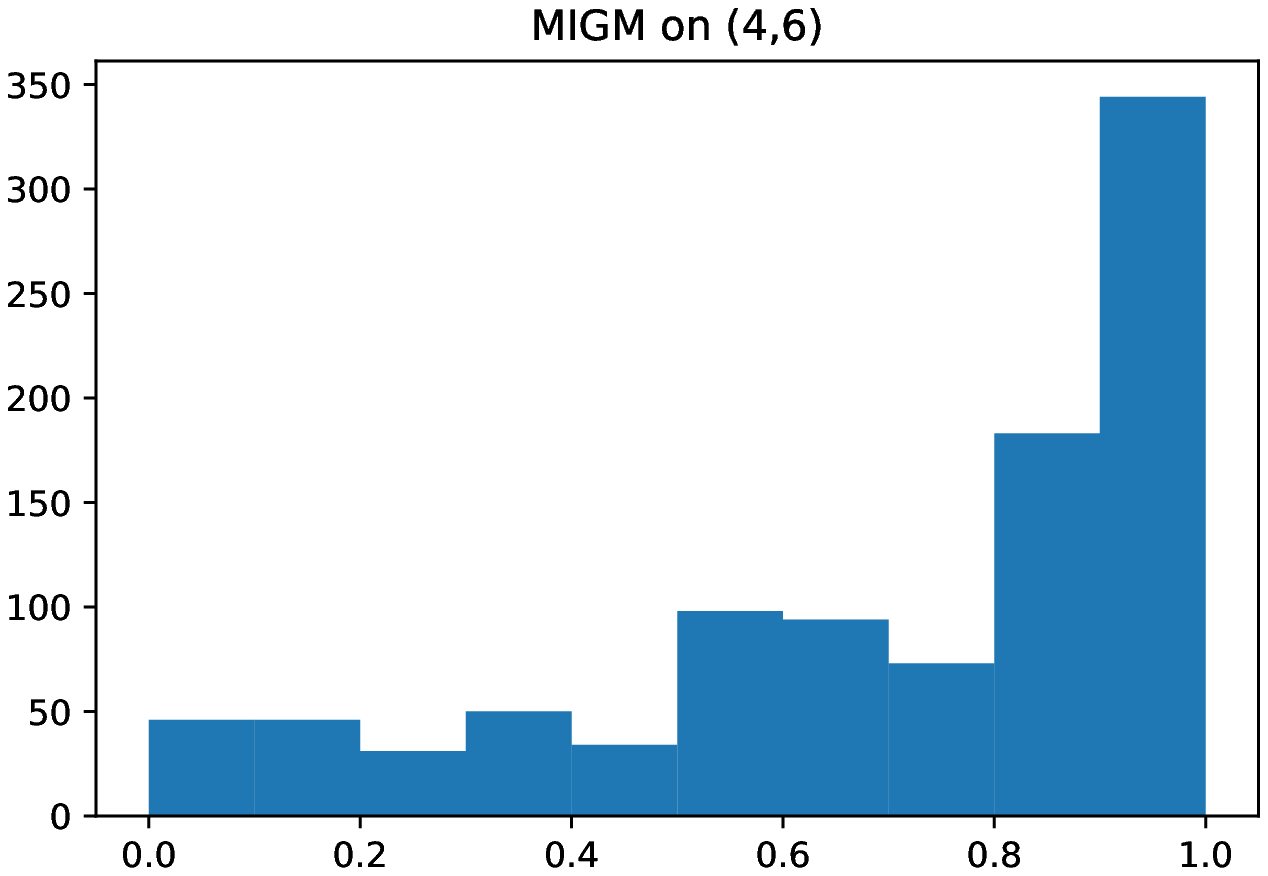}}}
	\caption{Distribution of success probabilities for individual CIFAR10 test data points under MIGM attack}
	\label{fig:cifar10_migm_dist}
\end{figure}

We see that among the $(y,t)$ pairs where $t\neq 6$, the  security definition needed for our main theorem is satisfied with high probability over the test examples. However, many of the examples are vulnerable to a targeted attack with target class $t=6$. This suggests that the Frog class is especially distinct from the other $9$ classes, such that even when it is randomly included in a binary partition, the neural network still builds a kind of frog detector separate from the other randomly included classes. 

\subsection{Analysis of black-box adversarial accuracy}\label{sec:empirical_substitute}

Next, we empirically analyze the robustness of our random ensemble construction to black-box transfer learning attacks. Instead of performing a transfer attack from a standard model, these attacks train a specific substitute model by querying the black-box classifier directly. We use the CleverHans attack library \cite{Papernot2016a} to benchmark this. The attack algorithm trains a two-layer fully connected substitute model iteratively augmenting its training data set via queries to the random ensemble scheme and then uses the Fast Gradient Sign Method on the substitute model. 

Because the attack library is not designed for querying classifier which abstains, we perform substitute model training with a non-abstaining random ensemble (i.e. $r = 1/2$). We consider the threshold $r$ at the end when analyzing the final true and adversarial test accuracies. In order to incorporate the abstain label, we use the following definitions of accuracy for our experiments. The true test accuracy requires the classifier to make the correct, non-abstaining prediction. However when computing adversarial accuracy, we also consider it a success if the classifier outputs $\omega$.

\begin{definition}[True and adversarial test accuracy]
	Given a multiclass classifier \\$F: \mathcal{X} \rightarrow \{1,\cdots,N,\omega\}$ which is allowed to abstain from making a prediction (as represented by the output $\omega$), the relevant accuracy benchmarks are
	\begin{align*}
		\text{True accuracy} &:= \mathop\mathbb{E}_{(x,y)} \left[\mathbbm{1}[F(x) = y]\right]\\
		\text{Adversarial accuracy} &:= \mathop\mathbb{E}_{(\hat{x},y)} \left[\mathbbm{1}[F(\hat{x}) \in \{y, \omega\}]\right],
	\end{align*}
	
	where $x$ is the original data point and $\hat{x}$ is an adversarial perturbation of $x$. \qed
\end{definition}

All random binary classifiers used in these experiments are the same architecture as the random binary classifiers in Section \ref{sec:perturb}. Figure \ref{fig:accuracy_vs_robustness} shows that the ensemble enjoys good adversarial accuracy in the low-$r$ regime.

\begin{figure}[ht]
	\centering
	\subfloat{{\includegraphics[width=0.47\columnwidth]{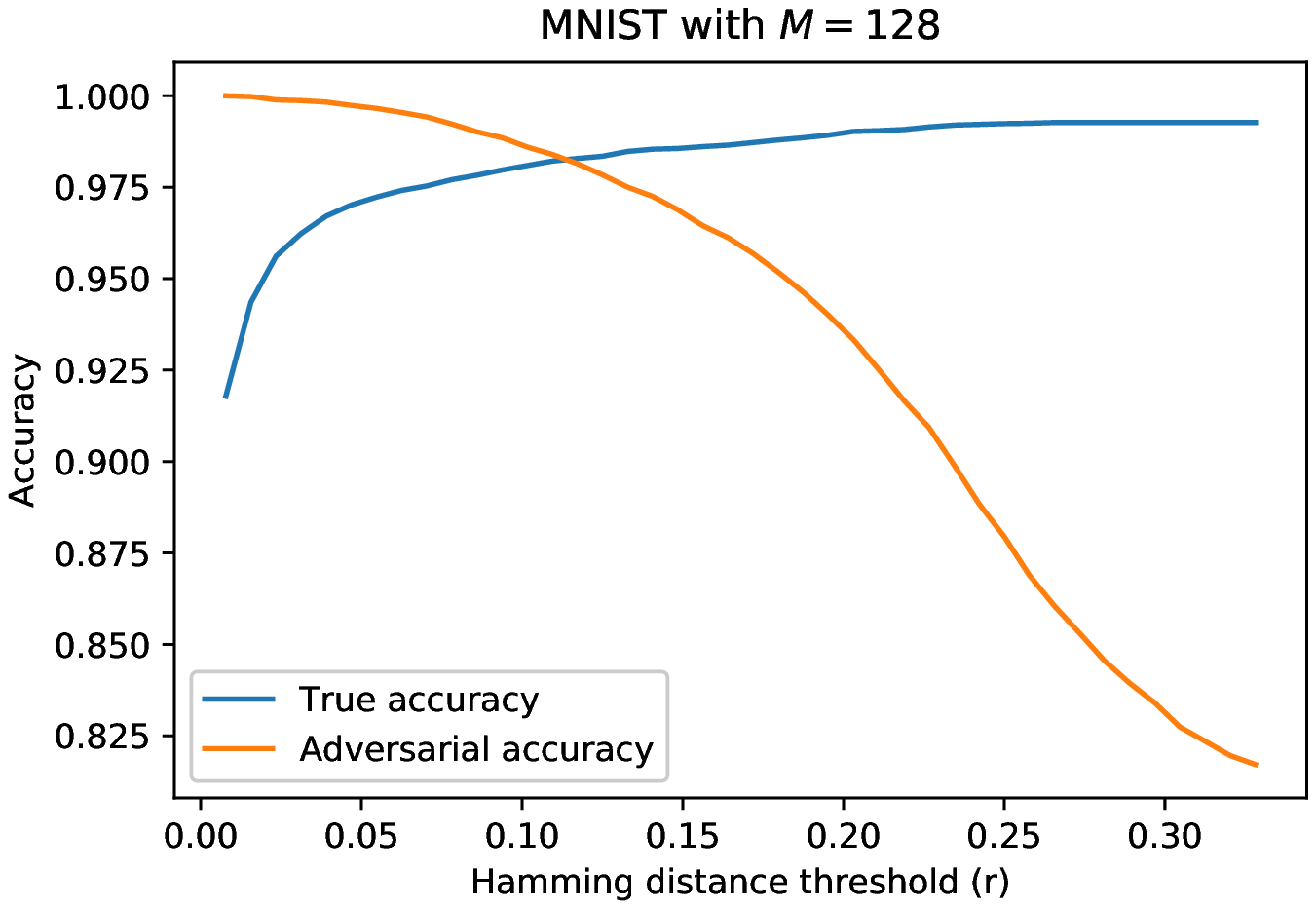}}}
	\qquad
	\subfloat{{\includegraphics[width=0.47\columnwidth]{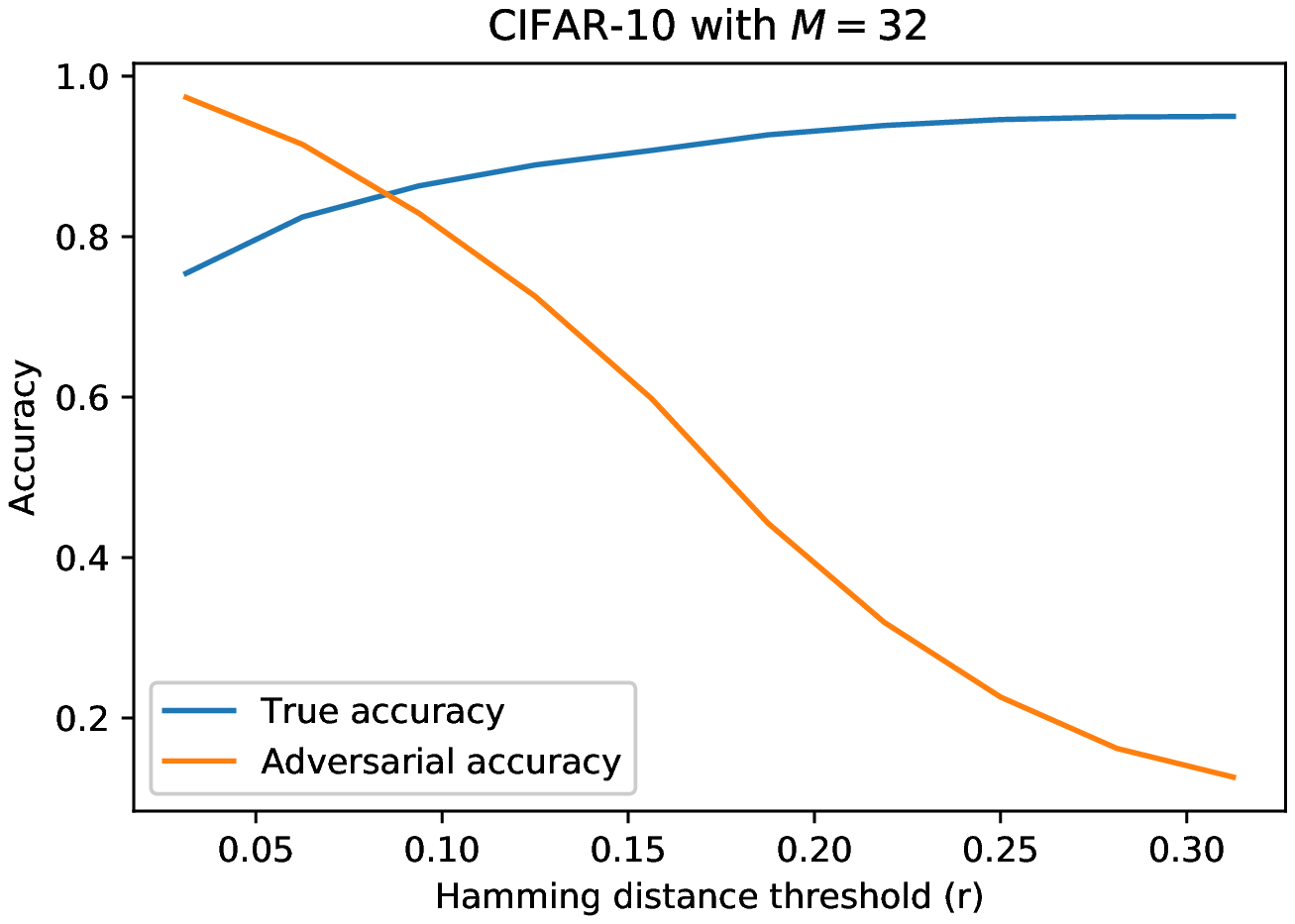}}}
	\caption{Accuracy versus Hamming distance ratio ($r$)}
	\label{fig:accuracy_vs_robustness}
\end{figure}

\section{Discussion}

The experiments demonstrate that random ensembles are a promising approach to security against black-box transfer attacks. The security of our scheme can be more reliably estimated than traditional constructions. We show that the security assumption holds for the majority of pairs of classes without any modification to standard architectures used on CIFAR-10. Security properties of our construction will improve for larger classification problems with more classes, although the number of random classifiers needed will as well. 

One important item to note is that the random ensemble construction is not compatible with standard techniques of robust training. Robust training tends to decrease the standard test error of the classifier, which means that a larger threshold $r$ needs to be used to account for natural errors in the individual random classifiers. However, a larger $r$ value leads to weaker security in the ensemble. 


\bibliography{neurips2019}
\bibliographystyle{alpha}

\appendix
\newpage
\section{Proofs}\label{sec:proofs}
\begin{lemma}\label{lem:trim}
	
	Fix any query point $q$ and threshold $r < 1/2$ such that $Mr\in\mathbb{Z}$. Given a random ensemble function $F_Z: \mathcal{X} \rightarrow \{1,\dotsc,N\}$ with $M$ independently and identically generated random classifiers and threshold $r < 1/3$, fix some $j\in\{1,\dotsc,M\}$ and let $F_{Z^{-j}}$ denote the modified ensemble which ignores the $j$th random classifier and takes the vote over only the remaining $M-1$ classifiers. Then
	
	\begin{align*}
		\mathop{\Pr}_{Z^{-j}\sim \{\pm 1\}^{N\times(M-1)}} \left[ F_Z(q) \neq F_{Z^{-j}}(q) \right]
		&\le 4N\sqrt{\frac{1-r}{2\pi Mr}} 2^{-M(1-H_2(r))},
	\end{align*}
	where the probability is taken only over the matrix $Z^{-j}$ and is independent of the column $Z^j$. $H_2(r) = -r\log_2 r - (1-r)\log_2 (1-r)$ can be bounded away from $1$ when $r$ is bounded away from $1/2$. 
\end{lemma}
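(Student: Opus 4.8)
The idea is to reduce the event $F_Z(q)\neq F_{Z^{-j}}(q)$ to a purely combinatorial statement about the Hamming distances from the predicted codeword to the (marginally uniform) codeword rows $Z_1,\dots,Z_N$, and then bound that event by a union bound over the $N$ classes together with a binomial lower-tail estimate.

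Write $d_y := \|Z_y - C(q)\|_H$ for the decoder distances computed from all $M$ coordinates and $d_y^{-j} := \|Z_y^{-j} - C^{-j}(q)\|_H$ for those computed from the $M-1$ surviving coordinates. Deleting coordinate $j$ changes each $d_y$ by at most one, i.e. $d_y\in\{d_y^{-j},\,d_y^{-j}+1\}$, and moves the decoding radius from $Mr$ to $(M-1)r$, a shift of $r<1$. A short case analysis over which codeword is nearest to $C^{-j}(q)$ and on which side of the threshold it lies then shows that $F_Z(q)$ and $F_{Z^{-j}}(q)$ can disagree only when the decoder sits right on a decision boundary; quantitatively, disagreement forces $\min_y d_y^{-j}\le Mr-1$, i.e. some codeword row lies within Hamming distance $Mr-1$ of $C^{-j}(q)$. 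The key structural point is that this certificate is a statement about $Z^{-j}$ and $C^{-j}(q)$ alone and never refers to the deleted column $Z^j$, which is exactly why the final bound can be asserted uniformly over $Z^j$.

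It then remains to bound $\Pr_{Z^{-j}}[\,\exists\, y:\ d_y^{-j}\le Mr-1\,]$. Union-bounding over the $N$ classes, for each fixed $y$ the row $Z_y$ is uniform on $\{\pm1\}^M$, so the relevant distance is a $\mathrm{Binomial}(M-1,1/2)$ lower tail. I would bound $\Pr[\mathrm{Binomial}(M-1,1/2)\le Mr-1]$ by its largest term times a geometric-series factor (bounded by a small absolute constant once $r<1/3$, since successive binomial coefficients below the center decrease at ratio at most $r/(1-r)$), then estimate the largest term by the standard entropy--Stirling bound, which supplies the $2^{-M(1-H_2(r))}$ factor, the $1/\sqrt{2\pi Mr}$ factor, and an extra $\sqrt{1-r}$. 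Collecting the absolute constants — the geometric factor, the Stirling error, and the passage from $M-1$ to $M$ — shows the per-class probability is at most $4\sqrt{\frac{1-r}{2\pi Mr}}\,2^{-M(1-H_2(r))}$, and multiplying by $N$ gives the claim. The closing remark about $H_2(r)$ being bounded away from $1$ is just continuity of $H_2$ together with $H_2(r)<1$ for $r\neq 1/2$, so that $2^{-M(1-H_2(r))}$ is genuinely exponentially small in $M$.

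The main obstacle is making the combinatorial reduction airtight, because $C(q)$ is not an external object: each $f_i$ is trained on column $i$ of $Z$, so after conditioning on the learned classifiers the distances $d_y$ are not literally independent $\mathrm{Binomial}(M,1/2)$ variables, and the naive implication ``disagreement $\Rightarrow \exists y:\ d_y^{-j}\le Mr-1$'' is too lossy when $q$ is, say, close to a training point whose codeword the classifiers may essentially memorize. The fix is to track \emph{which} configurations of column $j$ are actually pivotal — roughly, only those where the nearest codeword to $C^{-j}(q)$ changes its membership in the set $\{y : Z_{yj}=f_j(q)\}$, or where that nearest codeword sits exactly at radius $Mr-1$ — and to argue the remaining boundary configurations still have exponentially small probability, using that the per-column contributions to $d_y^{-j}$ are independent across the $M-1$ columns and that, by the label-symmetry assumption $\mathsf{ML}(\{(x_i,z_i)\}) = -\mathsf{ML}(\{(x_i,-z_i)\})$, each column contributes a fair coin conditioned on the others. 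Getting this case analysis to cover ties in the nearest codeword and both sides of the threshold without reopening a non-exponentially-small hole is the delicate step; once the correct event is isolated, the tail bookkeeping above is routine.
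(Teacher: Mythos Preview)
The paper's argument is far shorter and avoids your detour through the full lower tail. It observes directly that adding back the $j$th coordinate can change the ensemble's output only when $C^{-j}(q)$ sits exactly on the decision boundary for some class, and then simply \emph{counts} those boundary points in $\{\pm1\}^{M-1}$: at most $2N\binom{M-1}{Mr}$ of them (the factor $2$ covers both sides of the threshold so the bound holds regardless of the value of $f_j(q)$), divided by $2^{M-1}$. Rewriting $\binom{M-1}{Mr}=(1-r)\binom{M}{Mr}$ and applying the Stirling/entropy bound on $\binom{M}{Mr}$ gives the stated inequality directly. There is no tail sum, no geometric-series factor, and no case analysis over which codeword is nearest.

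Your initial reduction ``disagreement $\Rightarrow \min_y d_y^{-j}\le Mr-1$'' is, as you yourself diagnose, unusable rather than merely loose: whenever $q$ is correctly classified, the true-class distance is already small, so that event holds with probability close to $1$ and says nothing. The fix you sketch---restricting to pivotal column-$j$ configurations---is essentially the paper's exact-boundary idea rediscovered, but you do not carry it out, and the one concrete mechanism you invoke for it is incorrect. The label symmetry $\mathsf{ML}(\{(x_i,-z_i)\})=-\mathsf{ML}(\{(x_i,z_i)\})$ does make each $f_i(q)$ a marginal fair coin (the involution $Z^i\mapsto -Z^i$ flips it and preserves the law of $Z^i$), but it leaves the indicator $\mathbbm{1}[Z_{yi}\ne f_i(q)]$ \emph{invariant}, since $Z_{yi}$ and $f_i(q)$ flip together. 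So this symmetry cannot deliver the ``fair coin per column'' needed for $d_y^{-j}\sim\mathrm{Binomial}(M-1,1/2)$. Your dependence concern is genuine---indeed the paper's counting step also silently treats one of $C^{-j}(q)$, $Z_y^{-j}$ as uniform conditional on the other without justification---but the remedy you propose does not close the gap.
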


The lemma shows that for any $j$, with high probability over $Z^{-j}$ the query answer $F_Z(q)$ is independent of $Z^j$, so that no information is revealed by the queries about column $j$. In the following proofs we will use the shorthand $f_j := f_{Z^j}$, i.e. the random classifier constructed from the $j$th column of $Z$. 


\begin{proof}
	The only way the additional classification output of $f_{j}(q)$ can influence the decision of the entire ensemble of $F_{Z^{-j}}(q)$ is if the predicted codeword of length $M-1$ is on the decision boundary between some class $i$ and the abstaining space corresponding to $\omega$. In the boolean hypercube $\{\pm 1\}^{M-1}$, the number of points that are at a distance of exactly $k$ to any fixed point is $\binom{M-1}{k}$. Because we want our probability bound to hold true regardless of the value of $f_j$, we have to consider the possibility of $f_j(q)$ influencing the points on either side of the decision boundary. To account for this, we multiply the number by $2$. Then over all $N$ classes, the number of possible points on the decision boundary is at most $2N\binom{M-1}{Mr}$ by a union bound.
	
	\begin{align}
		\frac{2N}{2^{M-1}}\dbinom{M-1}{Mr}
		&= \frac{4N(1-r)}{2^{M}} \dbinom{M}{Mr}.\label{eq:hypercube_probability}
	\end{align}
	
	We now apply the binomial coefficient upper bound from \cite{MacWilliams1978}, reproduced below:
	
	\begin{lemma}\label{lem:binom_coefficient_bound}
		Suppose $\lambda n$ is an integer, where $0 < \lambda < 1$. Then
		\begin{align*}
		\dbinom{n}{\lambda n} &\le \frac{1}{\sqrt{2\pi n\lambda (1-\lambda)}}2^{n H_2(\lambda)},
		\end{align*}
		where $H_2(\lambda) = -\lambda \log_2\lambda - (1-\lambda) \log_2(1-\lambda)$ is the negative entropy function.
	\end{lemma}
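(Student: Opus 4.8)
The inequality is the classical Stirling-type estimate for a single binomial coefficient, so the plan is to derive it from a non-asymptotic form of Stirling's formula. I would use Robbins' bounds: for every integer $m\ge 1$,
\[
\sqrt{2\pi m}\,\left(\frac{m}{e}\right)^{m} e^{\frac{1}{12m+1}} \;\le\; m! \;\le\; \sqrt{2\pi m}\,\left(\frac{m}{e}\right)^{m} e^{\frac{1}{12m}}.
\]
Writing $k=\lambda n$ and $n-k=(1-\lambda)n$, the hypotheses $0<\lambda<1$ and $\lambda n\in\mathbb{Z}$ guarantee that $k$ and $n-k$ are integers in $\{1,\dots,n-1\}$, so these bounds apply to each of the three factorials in $\binom{n}{k}=n!/(k!\,(n-k)!)$.

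First I would apply the upper bound to $n!$ and the lower bound to $k!$ and $(n-k)!$. The factor $e^{-n}$ in the numerator cancels the factor $e^{-k}e^{-(n-k)}$ in the denominator, leaving the product of three pieces: a \emph{power piece} $n^{n}/(k^{k}(n-k)^{n-k})$, a \emph{square-root piece} $\sqrt{2\pi n}/(\sqrt{2\pi k}\,\sqrt{2\pi(n-k)})$, and an \emph{error piece} $\exp\!\bigl(\tfrac{1}{12n}-\tfrac{1}{12k+1}-\tfrac{1}{12(n-k)+1}\bigr)$. Substituting $k=\lambda n$ into the power piece gives $\lambda^{-\lambda n}(1-\lambda)^{-(1-\lambda)n}=2^{nH_2(\lambda)}$, and cancelling the $2\pi$ factors in the square-root piece gives $1/\sqrt{2\pi n\lambda(1-\lambda)}$; these are exactly the two factors in the statement.

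The only remaining point is to bound the error piece by $1$, i.e.\ to check $\tfrac{1}{12n}\le \tfrac{1}{12k+1}+\tfrac{1}{12(n-k)+1}$. Since $1\le k\le n$ and $1\le n-k\le n$, each term on the right is at least $1/(12n+1)$, so their sum is at least $2/(12n+1)\ge 1/(12n)$ for $n\ge 1$, which completes the estimate. The whole argument is essentially bookkeeping; the one spot that warrants a little care is this last step — controlling the sign of the Stirling remainder — together with checking that $k$ and $n-k$ are genuinely at least $1$ (which is precisely why the hypotheses on $\lambda$ and $\lambda n$ are imposed). Since no new idea is involved, an equally legitimate route is simply to invoke this as a standard fact, as the statement already attributes it to \cite{MacWilliams1978}.
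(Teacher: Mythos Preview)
Your argument via Robbins' non-asymptotic Stirling bounds is correct; the cancellation of the $e^{-n}$ factors, the identification of the power piece with $2^{nH_2(\lambda)}$, the square-root simplification, and the sign check on the remainder $\tfrac{1}{12n}\le\tfrac{2}{12n+1}$ all go through as you describe.

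As for comparison: the paper does not actually prove this lemma. It is stated twice (once inside the proof of Lemma~\ref{lem:trim} and once in the appendix of probability inequalities) and in both places is simply attributed to \cite{MacWilliams1978} without further argument. So your proposal is strictly more than what the paper provides --- you supply a self-contained derivation where the paper only cites the result. Your closing remark that one could ``simply invoke this as a standard fact'' is in fact exactly what the paper does.
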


	This gives the result
	
	\begin{align*}
		\dbinom{M}{Mr} &\le \frac{1}{\sqrt{2\pi M r(1-r)}} 2^{MH_2(r)},
	\end{align*}

	where $H_2(r) = -r\log_2r-(1-r)\log_2(1-r)$ is the negative entropy function. Thus the probability in (\ref{eq:hypercube_probability}) can be bounded by
	
	\begin{align*}
		\frac{4N(1-r)}{2^{M}} \frac{1}{\sqrt{2\pi Mr(1-r)}} 2^{M H_2(r)} 
		&\le 4N\sqrt{\frac{1-r}{2\pi Mr}} 2^{-M(1-H_2(r))}.
	\end{align*}
	
\end{proof}

	Since $H_2(r)$ is bounded away from $1$ when $r$ is bounded away from $1/2$, this gives an exponentially decaying probability bound in $M$. 
	
The next lemma is a concentration result that holds when no information is revealed by the queries about any individual column. 

\begin{lemma}\label{lem:binomial}
	Suppose that the event $f_j(x+\rho)\neq f_j(x)$ is independent and identical for each column $j$. Fix a data point $(x,y)$. Given a perturbation $\rho$ which solves the security challenge for the random ensemble with target probability $\epsilon > 0$, then for every random classifier in the ensemble, $\rho$ solves the security challenge for it with failure rate $\delta < 2(r + \sqrt{\log(1/\epsilon) / 2M})$
\end{lemma}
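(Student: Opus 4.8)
The plan is to reason backwards from the success event of the ensemble attack to the per-classifier failure rate. Under the hypothesis, the events $\{f_j(x+\rho)\neq f_j(x)\}$ are i.i.d.\ across $j$; call their common probability $p$ (so the per-classifier failure rate is $\delta = 1-p$, and I want to upper bound $\delta$, equivalently lower bound $p$). Let $S = \sum_{j=1}^M \mathbbm{1}[f_j(x+\rho)\neq f_j(x)]$ be the number of flipped coordinates between the predicted codewords $C(x)$ and $C(x+\rho)$; then $S \sim \mathrm{Binomial}(M,p)$. The first step is to observe that for the attack to succeed --- i.e.\ for $F_Z(x+\rho)\notin\{F_Z(x),\omega\}$ --- the codeword $C(x+\rho)$ must land strictly within Hamming distance $Mr$ of some codeword $Z_t$ with $t \neq F_Z(x)$, while $C(x)$ is within distance $Mr$ of $Z_y$ (since $F_Z(x)=y$ is not abstaining, using $r<1/3$). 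By the triangle inequality in Hamming distance, the distance between $C(x)$ and $C(x+\rho)$ is at least $\|Z_y - Z_t\|_H - 2Mr$. Since $Z$ has i.i.d.\ Rademacher entries and $Z_y, Z_t$ are independent rows, $\|Z_y - Z_t\|_H$ is itself Binomial$(M,1/2)$, concentrated around $M/2$; but I should be careful here, because the lemma statement isolates the per-column event and treats $S$ as the fundamental random quantity. The cleaner route is: the success event is contained in the event $\{S \geq \|Z_y-Z_t\|_H - 2Mr\}$ for the relevant $t$, but actually the simplest sufficient bound is to note that success requires $S$ to be \emph{at least} roughly $M/2 - 2Mr$ in the typical case, OR --- and this is the key simplification the authors seem to be after --- to note that success requires $C(x+\rho)$ to be flipped away from $Z_y$'s neighborhood, which forces $S > Mr$ at a minimum is too weak; instead success forces $S \geq (\text{something like } M/2)$.

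Let me restructure: the honest analysis is that conditioned on the rows $Z_y$ and $Z_t$, we have $\|Z_y - Z_t\|_H = D$ where $D$ is the number of coordinates in which they disagree, and success requires $S \geq D - 2Mr$. Then $\delta = 1 - p = 1 - \mathbb{E}[S]/M$. I want to bound $\epsilon$, the success probability, from above. By the contrapositive formulation in the theorem statement, $\epsilon < \exp(-2M(\delta/2 - r)^2)$ is the target, which rearranges exactly to $\delta < 2(r + \sqrt{\log(1/\epsilon)/(2M)})$. So the second step is a Hoeffding-type bound: since $S$ is a sum of $M$ i.i.d.\ Bernoulli$(p)$ variables with $p = 1-\delta$, we have $\Pr[S \geq M/2 - \text{slack}]$; more precisely, $S$ being large is needed for success, so $\epsilon \le \Pr[S \geq M(1/2 - 2r)]$ roughly, and $\mathbb{E}[S] = M(1-\delta)$. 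If $1-\delta < 1/2 - 2r$, i.e.\ $\delta > 1/2 + 2r$, then Hoeffding gives $\epsilon \le \exp(-2M(1/2 - 2r - (1-\delta))^2) = \exp(-2M(\delta - 1/2 - 2r)^2)$ --- hmm, this does not match the target form cleanly, so I suspect the intended argument pins down the threshold differently, perhaps using $Mr$ directly as the quantity $S$ must exceed (each classifier in the $M$-ensemble, $C(x)$ is within $Mr$ of $Z_y$ and to escape we need to flip more than... ). I would reconcile this by tracking constants carefully: the third step is to verify that the Hoeffding exponent, after substituting the correct threshold that success imposes on $S$ relative to its mean $M(1-\delta)$, collapses to $2M(\delta/2 - r)^2$. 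This amounts to showing the "gap" between the required value of $S$ and its mean is exactly $M(\delta/2 - r)$.

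The main obstacle I anticipate is pinning down precisely \emph{which} threshold the success event imposes on $S$, and thereby getting the factor of $2$ and the $r$ versus $2r$ right --- the triangle-inequality step involves both $Z_y$ and the target $Z_t$, and the random distance $D=\|Z_y - Z_t\|_H$ must be handled (either conditioned on and bounded, or absorbed): if we use $D \approx M/2$ typically and success needs $S \ge D - 2Mr \approx M/2 - 2Mr$, then with $\mathbb{E}[S] = M(1-\delta)$ the gap is $M(1-\delta) - (M/2 - 2Mr) = M(1/2 - \delta + 2r)$, and requiring this gap to be the negative of $M(\delta/2 - r)$ forces a relation that doesn't obviously hold, so I expect the actual proof either (a) uses a one-sided bound where only $C(x)$ needs to stay within $Mr$ of $Z_y$ and $C(x+\rho)$ needs to leave, giving threshold $S > Mr$ is again too weak, or (b) cleverly uses that $F_Z(x) = y$ already means $C(x)$ is within $Mr$, and for the perturbed point to be classified as $t\neq y$ we get $\|C(x+\rho) - Z_t\|_H < Mr$ which combined with $\|C(x) - Z_y\|_H \le Mr$ and then bounding via the ensemble's own behavior. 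I would resolve this by writing the chain of Hamming-distance inequalities explicitly, identifying the correct deterministic lower bound on $S$ that success entails (I believe it works out to $S \ge M(1 - 2r)$ roughly, since flipping from "close to $Z_y$" to "close to the complement-ish region $Z_t$" requires flipping most coordinates when $r$ is small), and then applying Hoeffding's inequality to $\Pr[S \ge M(1-2r)]$ with mean $M(1-\delta)$, whose exponent $2M((1-2r) - (1-\delta))^2 = 2M(\delta - 2r)^2$ --- and then noting $(\delta - 2r)^2 \ge$ ... no; the cleanest match to $2(\delta/2 - r)^2 \cdot M$ wants exponent $2M(\delta/2 - r)^2 = \tfrac{M}{2}(\delta - 2r)^2$, a factor $4$ off, which strongly suggests the success event only needs $S$ to cross the \emph{midpoint} $M/2 \cdot(\text{something})$ --- so I would carefully redo the triangle inequality accounting for $D\approx M/2$, conclude success needs $2S \ge M - 4Mr$ hence $S \ge M(1/2 - 2r)$... and accept that reconciling the exact constant is the crux, to be settled by the explicit inequality chain rather than hand-waving.
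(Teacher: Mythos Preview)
Your proposal has a genuine gap: the random variable you center the argument on, $S=\sum_j \mathbbm{1}[f_j(x+\rho)\neq f_j(x)]$, is the wrong one, and the triangle-inequality route through $D=\|Z_y-Z_t\|_H$ cannot recover the bound in the lemma. You already sensed this --- your exponents keep coming out as $2M(\delta-1/2-2r)^2$ or a factor $4$ away from $2M(\delta/2-r)^2$ --- and that discrepancy is real, not an arithmetic slip. Any bound obtained via $S\ge D-2Mr$ with $D\approx M/2$ forces $\delta>1/2$ before Hoeffding even applies, which is useless for the downstream security conclusion.

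The paper does not track $S$ at all. Instead it fixes a target class $t$ and looks, coordinate by coordinate, at the event $\mathcal{E}_{tj}=\{f_j(x+\rho)=Z_{tj}\}$, i.e.\ whether the perturbed prediction \emph{matches the target codeword bit}. The key step is to condition on $Z_{tj}$: taking $Z_{yj}=+1$ without loss of generality, when $Z_{tj}=+1$ one simply bounds $\Pr[\mathcal{E}_{tj}]\le 1$, and when $Z_{tj}=-1$ the probability $\Pr[f_j(x+\rho)=-1\mid Z_{yj}=+1,Z_{tj}=-1]$ is \emph{exactly} the quantity $1-\delta$ from Definition~\ref{def:random_classifier_challenge}. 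Averaging over the coin $Z_{tj}$ gives $\Pr[\mathcal{E}_{tj}]\le \tfrac12(1)+\tfrac12(1-\delta)=1-\delta/2$. This is where the factor $\delta/2$ (rather than $\delta$ or $\delta-1/2$) originates: the randomness of $Z_{tj}$ itself contributes a $1/2$, independently of whether the attack flips $f_j$. Your approach buries this randomness inside the aggregate distance $D$ and then loses it through the triangle inequality.

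With $\Pr[\mathcal{E}_{tj}]\le 1-\delta/2$ established, success requires at least $M(1-r)$ of the $\mathcal{E}_{tj}$ to occur, and a direct Hoeffding bound on $\mathrm{Binom}(M,1-\delta/2)$ exceeding $M(1-r)$ yields $\epsilon\le\exp\bigl(-2M(\delta/2-r)^2\bigr)$, which rearranges to the stated inequality. No triangle inequality, no handling of $D$, and no union over $t$ beyond a trivial one are needed for the exponent.
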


\begin{proof}
	
Recall that the adversary is said to have solved the security challenge for the random ensemble if the vector of code bits $C_Z(x+\rho) := (f_1(x+\rho),\dotsc,f_M(x+\rho))$ has Hamming distance less than $Mr$ to any other codeword $Z_i$, where $i\neq y$. Since each entry of the code matrix is sampled independently, we can consider the probability of this event bit-by-bit.

Let $\mathcal{E}_{tj}$ be the event where $f_j(x+\rho) = Z_{tj}$. Let $\mathcal{E}_t$ be the probability of the event where $\|C_Z(x+\rho') - Z_t\|_1 \le Mr$, meaning the codeword for class $t$ is the closest. By the independence assumption, we have $\Pr[\mathcal{E}_t] = \Pr[X > M(1-r)]$ where $X \sim \text{Binom}(M, \Pr[\mathcal{E}_{tj}])$, or equivalently,

\begin{align}
	\Pr[\mathcal{E}_t] &= \Pr \left[ X < Mr \, | \, X\sim\text{Binom}(M, 1-\Pr[\mathcal{E}_{tj}])\right]. \label{eq:one_column}
\end{align}

The probability of changing $F(x)$ from $y$ to any other class can be bounded by applying the union bound to all $t\neq y$. We obtain

\begin{align*}
	\Pr[F_Z(x+\rho) \neq F_Z(x)] &\le (N-1) \Pr[\mathcal{E}_t],
\end{align*}
and by the assumption of the lemma we know the left-hand side probability is $\delta > 0$. Thus we just need to compute $\Pr[\mathcal{E}_{ij}]$ and apply a tail inequality for the binomial distribution. 

Fix one underlying code bit $j$ and some other class $t \neq y$. Each bit $Z_{tj}$ differs from the corresponding bit of $C_{yj}$ with probability $1/2$ under the random code sampling scheme. Without loss of generality, we'll let $Z_{yj} = +1$. We analyze the probability of the event $f_j(x + \rho) = Z_{tj}$ by conditioning on $Z_{tj}$, obtaining

\begin{align*}
	\Pr\left[\mathcal{E}_{tj}\right] &= 
	\Pr\left[Z_{tj} = -1\right]\Pr\left[f_j(x+\rho)=-1|Z_{tj}=-1, Z_{yj}=1\right]\\
	&+ \Pr\left[Z_{tj}=+1\right]\Pr\left[f_j(x+\rho)=+1|Z_{tj}=+1, Z_{yj}=+1\right].
\end{align*}

We note that the term $\Pr[f_j(x+\rho)=-1|Z_{tj}=-1, Z_{yj}=+1]$ is exactly the the probability $1-\delta$ in Definition \ref{def:random_classifier_challenge}. Then $\text{Pr}[\mathcal{E}_{tj}]$ can be bounded by

\begin{align*}
	\Pr[\mathcal{E}_{tj}] &\le \frac{1}{2} (1-\delta) + \frac{1}{2} (1) = 1 - \frac{\delta}{2}.
\end{align*}

Then the probability in (\ref{eq:one_column}) can be bounded by using Hoeffding's inequality, which states that given $X \sim \text{Binom}(M,p)$, for any $\alpha > 0$,

\begin{align*}
\Pr\left[X \le (p-\alpha)M \right] &\le \exp\left(-2M\alpha^2\right). 
\end{align*}

We let $X = \sum_j \mathcal{E}_{tj}$, so $p < 1 - \frac{\delta}{2}$ and $\alpha < p + r - 1 = r - \frac{\delta}{2}$. Applying Hoeffding's inequality with these parameters yields

\begin{align*}
	\Pr[\mathcal{E}_t \le Mr] &\le \exp\left( -2M\left(r-\frac{\delta}{2}\right)^2\right).
\end{align*}

$\Pr[\mathcal{E}_t \le Mr]$ is the probability of the perturbation $\rho$ solving the security challenge for the random ensemble, so by the assumption in the lemma, this is at least $\epsilon$. Thus we obtain

\begin{align*}
	\epsilon &\le \exp\left( -2M\left(r-\frac{\delta}{2}\right)^2\right).\\
	\intertext{We solve for $\delta$ as a function of $\epsilon$ to obtain the failure probability of solving the security challenge for an individual classifier:}
	\log(1/\epsilon) &\ge 2M\left(\frac{\delta}{2} - r\right)^2\\
	\delta &\le 2\left(r + \sqrt{\frac{\log(1/\epsilon)}{2M}}\right).
\end{align*}

\end{proof}

\begin{proof}[Proof of Theorem \ref{thm:main}]

We are given an instance of the security challenge for a random binary classifier (Definition \ref{def:random_classifier_challenge}). Let $f_{\overline{z}}$ be the random binary classifier, where $\overline{z} \sim \{\pm 1\}^N$ is uniformly sampled. We can simulate an entire random ensemble by constructing $M-1$ additional random classifiers in the same way that $f_{\overline{z}}$ is sampled, so that $f_1 = f_{\overline{z}}$ and $f_2,\dotsc,f_M$ are freshly sampled. Let $Z^{-j}$ denote the matrix $Z$ without the $j$th column, so that $F_{Z^{-j}}: \mathcal{X}\rightarrow \{1, \dotsc, N\}$ denotes the output of the random ensemble ignoring $f_j$. 

By the definition of the security challenge, the adversary cannot query $f_1$; however since $F_{Z^{-1}}$ is simulated by the adversary, he can make queries to $F_{Z^{-1}}$ and run $\mathcal{A}$ to produce a perturbation $\rho$ attacking $F_{Z^{-1}}$. But if $F_{Z^{-1}}(q_i) = F_Z(q_i)$ for each query $q_i$, then $\mathcal{A}$ would have produced the same perturbation $\rho$ attacking $F_Z$. 

By Lemma \ref{lem:trim} and a union bound over the number of queries, the hypothetical query answers $a_1,\dotsc,a_Q$ to the entire ensemble $F_Z$ depend only on $F_{Z^{-1}}$ with probability at least

\begin{align}
	1 - \mathop{\Pr}_{Z^{-1}}\left[\exists i \, F_{Z^{-1}}(q_i) \neq F_Z(q_i)\right] 
	&\ge 1 - 4NQ\sqrt{\frac{1-r}{2\pi Mr}} 2^{-M(1-H_2(r))}.\label{eq:query_concentration}
\end{align}

Now in order to apply Lemma \ref{lem:binomial} to bound $\epsilon$ as a function of $\delta$, we want to show for each $j$ that the event $f_j(x+\rho) \neq f_j(x)$ is independent of the query answers $a_1,\dotsc, a_Q$. This can be done by applying Lemma \ref{lem:trim} again to each column $j$ to show that with high probability, the query answers only depend on the random sampling of $Z^{-j}$. Since $\rho=\phi(\{a_k\}_{k=1}^Q)$ is a function of the query answers, then this means that the adversary's chosen $\rho$ also only depends on $Z^{-j}$. We obtain

\begin{align*}
	\mathop{\Pr}_{Z^j} \left[ f_j(x+\rho) \neq f_j(x) \, | \, a_1,\dotsc, a_Q\right] 
	&= \mathop{\Pr}_{Z^j} \left[ f_j(x+\rho) \neq f_j(x) \, | \, Z^{-j}\right]\\
	&= \mathop{\Pr}_{Z^j} \left[ f_j(x+\rho) \neq f_j(x)\right],
\end{align*}
and we see that this probability has no dependence on the actual column $j$ since $Z^j$ is independent and identical for each $j$. We incur a factor $M$ in the probability of failure by applying a union bound of the failure probability in (\ref{eq:query_concentration}) over all $j=1,\dotsc,M$. Thus the event $f_j(x+\rho)\neq f_j(x)$ is independent and identical for each column $j$ with probability at least

\begin{align*}
	1 - 4NQ\sqrt{\frac{M(1-r)}{2\pi r}} 2^{-M(1-H_2(r))}.
\end{align*}

Then by Lemma \ref{lem:binomial}, the probability of $\rho$ changing the output of $f_{\overline{z}}$ is at least

\begin{align*}
	1 - 2\left(r + \sqrt{\frac{\log(1/\epsilon)}{2M}}\right).
\end{align*}
\end{proof}

\section{Probability inequalities}

\begin{lemma}\label{lem:binom_coefficient_bound}
	Suppose $\lambda n$ is an integer, where $0 < \lambda < 1$. Then
	\begin{align*}
		\dbinom{n}{\lambda n} &\le \frac{1}{\sqrt{2\pi n\lambda (1-\lambda)}}2^{n H_2(\lambda)}
	\end{align*}
	where $H_2(\lambda) = -\lambda \log_2\lambda - (1-\lambda) \log_2(1-\lambda)$ is the negative entropy function.
\end{lemma}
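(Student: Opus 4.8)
The crude counting bound $\binom{n}{\lambda n}\le 2^{nH_2(\lambda)}$ is elementary (it follows from $1=(\lambda+(1-\lambda))^n\ge\binom{n}{\lambda n}\lambda^{\lambda n}(1-\lambda)^{(1-\lambda)n}$, since all terms in the binomial expansion are nonnegative), but it misses the $\Theta(1/\sqrt{n})$ improvement carried by the $1/\sqrt{2\pi n\lambda(1-\lambda)}$ prefactor, so the plan is instead to derive the sharp constant from Stirling's approximation with explicit error control. Concretely I would use the Robbins bounds $\sqrt{2\pi m}\,(m/e)^m\exp(1/(12m+1))<m!<\sqrt{2\pi m}\,(m/e)^m\exp(1/(12m))$, valid for every positive integer $m$. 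Writing $k=\lambda n$ and $n-k=(1-\lambda)n$, both positive integers by hypothesis, expand $\binom{n}{\lambda n}=n!/((\lambda n)!\,((1-\lambda)n)!)$ and substitute the upper Robbins bound for the numerator $n!$ together with the lower Robbins bounds for the two denominator factorials; this reduces the lemma to a purely algebraic simplification.

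The bookkeeping splits into three pieces. First, the exponential-in-$n$ factors cancel exactly: the numerator contributes $e^{-n}$ and the denominator $e^{-\lambda n}e^{-(1-\lambda)n}=e^{-n}$. Second, the pure powers of $n$ collapse, since $n^n/((\lambda n)^{\lambda n}((1-\lambda)n)^{(1-\lambda)n})=\lambda^{-\lambda n}(1-\lambda)^{-(1-\lambda)n}=2^{-n\lambda\log_2\lambda-n(1-\lambda)\log_2(1-\lambda)}=2^{nH_2(\lambda)}$, which is exactly the claimed exponential factor. Third, the square-root prefactors combine as $\sqrt{2\pi n}/(\sqrt{2\pi\lambda n}\,\sqrt{2\pi(1-\lambda)n})=1/\sqrt{2\pi n\lambda(1-\lambda)}$, the claimed polynomial prefactor. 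What is left over is a single correction factor $\exp(1/(12n)-1/(12\lambda n+1)-1/(12(1-\lambda)n+1))$, and the lemma follows once this is shown to be at most $1$.

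The one genuinely nontrivial point, which I expect to be the main obstacle, is verifying $1/(12\lambda n+1)+1/(12(1-\lambda)n+1)\ge 1/(12n)$. Here I would observe that the two denominators always sum to the constant $12n+2$ regardless of $\lambda$, and that for a fixed positive sum the sum of the reciprocals of two positive reals is minimized when they are equal; hence the left-hand side is at least $2/(6n+1)$, and $2/(6n+1)\ge 1/(12n)$ reduces to $24n\ge 6n+1$, i.e.\ $18n\ge 1$, which holds for all $n\ge 1$. Assembling the three simplification pieces with this inequality gives $\binom{n}{\lambda n}\le\frac{1}{\sqrt{2\pi n\lambda(1-\lambda)}}2^{nH_2(\lambda)}$. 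The hypotheses $\lambda n\in\mathbb{Z}$ and $0<\lambda<1$ are precisely what make every factorial well-defined and every quantity above strictly positive, so no further case analysis is needed.
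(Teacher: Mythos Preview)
Your argument is correct. The Robbins form of Stirling's inequality is exactly the right tool, the three-piece algebraic decomposition is clean, and the verification that the residual correction factor is at most $1$ via the constraint $ (12\lambda n+1)+(12(1-\lambda)n+1)=12n+2$ and convexity of $t\mapsto 1/t$ is sound; every step uses only that $\lambda n$ and $(1-\lambda)n$ are positive integers, which the hypotheses guarantee.

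As for comparison with the paper: the paper does not actually prove this lemma. It is stated in the appendix as a background probability inequality and, where it is invoked in the proof of Lemma~\ref{lem:trim}, it is simply attributed to MacWilliams--Sloane \cite{MacWilliams1978}. So there is no competing argument to contrast with; you have supplied a self-contained proof where the paper only gives a citation. Your approach via Robbins' explicit Stirling bounds is in fact the standard route to this estimate (and is essentially how the cited reference derives it), so nothing unexpected is happening here.
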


\begin{lemma}\label{lem:hoeffding_binomial}[Hoeffding's inequality]
	Suppose $X \sim \text{Binom}(n, p)$. Then for any $\alpha>0$,
	\begin{align*}
		\Pr\left[X \le (p-\alpha)n\right] &\le \exp\left(-2\alpha^2 n\right)
	\end{align*}
\end{lemma}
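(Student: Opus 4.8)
The plan is to prove this lower-tail bound by the \emph{Chernoff (exponential Markov) method}, since $X\sim\mathrm{Binom}(n,p)$ is a sum $X=\sum_{i=1}^n X_i$ of independent Bernoulli$(p)$ variables with $\mathbb{E}[X]=np$. First I would fix a free parameter $s>0$ and rewrite the event $\{X\le(p-\alpha)n\}$ as $\{e^{-sX}\ge e^{-s(p-\alpha)n}\}$, so that Markov's inequality gives
\[
\Pr[X\le(p-\alpha)n]\le e^{s(p-\alpha)n}\,\mathbb{E}\!\left[e^{-sX}\right].
\]
By independence the moment generating function factorizes as $\mathbb{E}[e^{-sX}]=\prod_{i=1}^n\mathbb{E}[e^{-sX_i}]=\left(\mathbb{E}[e^{-sX_1}]\right)^n$, so the whole problem reduces to controlling the MGF of a single centered Bernoulli variable $X_1-p$, which takes values in the interval $[-p,1-p]$ of length exactly $1$.

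The crux is \textbf{Hoeffding's lemma}: for any random variable $Y$ with $\mathbb{E}[Y]=0$ and $Y\in[a,b]$ almost surely, and any real $\lambda$, one has $\mathbb{E}[e^{\lambda Y}]\le e^{\lambda^2(b-a)^2/8}$. I would prove this by convexity of $y\mapsto e^{\lambda y}$ on $[a,b]$, bounding it above by the chord through the endpoints and taking expectations (using $\mathbb{E}[Y]=0$) to get $\mathbb{E}[e^{\lambda Y}]\le e^{\psi(u)}$, where $u=\lambda(b-a)$ and $\psi$ is the logarithm of the resulting convex combination. A direct computation shows $\psi(0)=\psi'(0)=0$ and $\psi''(u)=g(1-g)\le 1/4$, where $g\in(0,1)$ is the tilted Bernoulli probability; Taylor's theorem then yields $\psi(u)\le u^2/8$, i.e. the claimed bound. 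Applying this with $Y=X_1-p$, $b-a=1$, and $\lambda=-s$ gives $\mathbb{E}[e^{-s(X_1-p)}]\le e^{s^2/8}$, hence $\mathbb{E}[e^{-sX}]\le e^{-snp}\,e^{ns^2/8}$.

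Substituting back, the tail is bounded by $\exp\!\left(-s\alpha n+ns^2/8\right)$ for every $s>0$, and the final step is to optimize this exponent over $s$. The minimizer is $s=4\alpha$, at which the exponent equals $-4\alpha^2 n+2\alpha^2 n=-2\alpha^2 n$, delivering exactly $\Pr[X\le(p-\alpha)n]\le e^{-2\alpha^2 n}$. I expect the only genuinely technical step to be Hoeffding's lemma, specifically the bound $\psi''\le 1/4$ on the second derivative of the Bernoulli cumulant generating function, since this is precisely what produces the constant $(b-a)^2/8=1/8$ and, after the optimization $s=4\alpha$, the sharp factor $2$ in the exponent; the surrounding Chernoff bookkeeping and the one-variable optimization are routine.
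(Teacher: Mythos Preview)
Your proof is correct and is the standard Chernoff argument combined with Hoeffding's lemma; the optimization $s=4\alpha$ indeed gives the constant $2$ in the exponent. Note, however, that the paper does not actually prove this lemma: it is listed in the appendix on probability inequalities as a known result without proof, so there is no paper proof to compare against. Your write-up is a complete and standard justification of the cited inequality.
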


\end{document}